\newcommand{\wcard}{\text{*}}
\newcommand{\update}[1]{{\color{black}#1}}
\newcommand{\erdosrenyi}{Erd{\H o}s-R\'enyi\xspace}
\newcommand*\dbar[1]{\overline{\overline{\lower0.2ex\hbox{$#1$}}}}
\newcommand{\transpose}{^{\text{T}}}
\newcommand{\graphletCounting}{$\Gamma_\text{1-hot}$}
\newcommand{\karyGnn}{$\Gamma_\text{GIN}$}
\newcommand{\karyRpGnn}{$\Gamma_\text{RPGIN}$}
\colorlet{darkgreen}{green!50!black}
\newcommand*{\Kfourbrgg}{{\scriptsize
\tikz[scale=0.1]{
\draw[thin] (2,0) -- (0,0) -- (0,2) -- (2,2) -- (0,0);
\draw[fill=darkgreen,draw=darkgreen] (0,0) circle (12pt);
\draw[fill=red,draw=red] (0,2) circle (12pt);
\draw[fill=darkgreen,draw=darkgreen] (2,2) circle (12pt);
\draw[fill=blue,draw=blue] (2,0) circle (12pt);
}
}}
\newcommand*{\Kfourbrgr}{{\scriptsize
\tikz[scale=0.1]{
\draw[thin] (2,0) -- (0,0) -- (0,2) -- (2,2) -- (0,0);
\draw[fill=darkgreen,draw=darkgreen] (0,0) circle (12pt);
\draw[fill=red,draw=red] (0,2) circle (12pt);
\draw[fill=red,draw=red] (2,2) circle (12pt);
\draw[fill=blue,draw=blue] (2,0) circle (12pt);
}
}}
\newcommand*{\VeeNbrg}{{\scriptsize
\tikz[scale=0.1]{
\draw[thin] (0,2) -- (2,2) -- (1,0);
\draw[fill=darkgreen,draw=darkgreen] (1,0) circle (12pt);
\draw[fill=red,draw=red] (0,2) circle (12pt);
\draw[fill=blue,draw=blue] (2,2) circle (12pt);
}
}}
\newcommand*{\VeeNbrr}{{\scriptsize
\tikz[scale=0.1]{
\draw[thin] (0,2) -- (2,2) -- (1,0);
\draw[fill=red,draw=red] (1,0) circle (12pt);
\draw[fill=red,draw=red] (0,2) circle (12pt);
\draw[fill=blue,draw=blue] (2,2) circle (12pt);
}
}}
\newcommand*{\TrigNbrg}{{\scriptsize
\tikz[scale=0.1]{
\draw[thin] (0,2) -- (2,2) -- (1,0) -- (0,2);
\draw[fill=darkgreen,draw=darkgreen] (1,0) circle (12pt);
\draw[fill=red,draw=red] (0,2) circle (12pt);
\draw[fill=blue,draw=blue] (2,2) circle (12pt);
}
}}
\newcommand{\allpossibleGraphsK}{\mathcal{F}_{\leq k}}
\newcommand{\subgraph}{F_{k'}}
\newcommand{\samestructure}{\cH(F_{k'})}
\newcommand{\harrow}[1]{\mathstrut\mkern2.5mu#1\mkern-11mu\raise1.6ex%
  \hbox{$\scriptscriptstyle\rightharpoonup$}}
\let\save@mathaccent\mathaccent
\newcommand*\if@single[3]{%
    \setbox0\hbox{${\mathaccent"0362{#1}}^H$}%
    \setbox2\hbox{${\mathaccent"0362{\kern0pt#1}}^H$}%
    \ifdim\ht0=\ht2 #3\else #2\fi
    }
\newcommand*\rel@kern[1]{\kern#1\dimexpr\macc@kerna}
\newcommand*\widebar[1]{{\@ifnextchar^{{\wide@bar{#1}{0}}}{\wide@bar{#1}{1}}}}
\newcommand*\wide@bar[2]{\if@single{#1}{\wide@bar@{#1}{#2}{1}}{\wide@bar@{#1}{#2}{2}}}
\newcommand*\wide@bar@[3]{%
\begingroup
\def\mathaccent##1##2{%
    \let\mathaccent\save@mathaccent
    \if#32 \let\macc@nucleus\first@char \fi
    \setbox\z@\hbox{$\macc@style{\macc@nucleus}_{}$}%
    \setbox\tw@\hbox{$\macc@style{\macc@nucleus}{}_{}$}%
    \dimen@\wd\tw@
    \advance\dimen@-\wd\z@
    \divide\dimen@ 3
    \@tempdima\wd\tw@
    \advance\@tempdima-\scriptspace
    \divide\@tempdima 10
    \advance\dimen@-\@tempdima
    \ifdim\dimen@>\z@ \dimen@0pt\fi
    \rel@kern{0.6}\kern-\dimen@
    \if#31
        \overline{\rel@kern{-0.6}\kern\dimen@\macc@nucleus\rel@kern{0.4}\kern\dimen@}%
        \advance\dimen@0.4\dimexpr\macc@kerna
        \let\final@kern#2%
        \ifdim\dimen@<\z@ \let\final@kern1\fi
        \if\final@kern1 \kern-\dimen@\fi
    \else
        \overline{\rel@kern{-0.6}\kern\dimen@#1}%
    \fi
}%
\macc@depth\@ne
\let\math@bgroup\@empty \let\math@egroup\macc@set@skewchar
\mathsurround\z@ \frozen@everymath{\mathgroup\macc@group\relax}%
\macc@set@skewchar\relax
\let\mathaccentV\macc@nested@a
\if#31
    \macc@nested@a\relax111{#1}%
\else
    \def\gobble@till@marker##1\endmarker{}%
    \futurelet\first@char\gobble@till@marker#1\endmarker
    \ifcat\noexpand\first@char A\else
        \def\first@char{}%
    \fi
    \macc@nested@a\relax111{\first@char}%
\fi
\endgroup
}
\newtheorem{definition}{Definition}
\def\eqref#1{equation~\ref{#1}}
\def\1{\bm{1}}
\def\vone{{\bm{1}}}
\def\vh{{\bm{h}}}
\def\mP{{\bm{P}}}
\def\mW{{\bm{W}}}
\def\mX{{\bm{X}}}
\DeclareMathAlphabet{\mathsfit}{\encodingdefault}{\sfdefault}{m}{sl}
\SetMathAlphabet{\mathsfit}{bold}{\encodingdefault}{\sfdefault}{bx}{n}
\def\cF{{\mathcal{F}}}
\def\cG{{\mathcal{G}}}
\def\cH{{\mathcal{H}}}
\def\sD{{\mathbb{D}}}
\def\sR{{\mathbb{R}}}
\def\sX{{\mathbb{X}}}
\def\sY{{\mathbb{Y}}}
\newcommand{\E}{\mathbb{E}}
\newcommand{\Wvar}{W}
\icmltitlerunning{Size-Invariant Graph Representations for Graph Classification Extrapolations}
\begin{document}

\twocolumn[
\icmltitle{Size-Invariant Graph Representations for Graph Classification Extrapolations}



\icmlsetsymbol{equal}{*}

\begin{icmlauthorlist}
\icmlauthor{Beatrice Bevilacqua}{equal,cs}
\icmlauthor{Yangze Zhou}{equal,stat}
\icmlauthor{Bruno Ribeiro}{cs}
\end{icmlauthorlist}

\icmlaffiliation{cs}{Department of Computer Science, and}
\icmlaffiliation{stat}{Department of Statistics, Purdue University, West Lafayette, Indiana, USA}

\icmlcorrespondingauthor{Beatrice Bevilacqua}{bbevilac@purdue.edu}

\icmlkeywords{Graph Representation Learning, Graph Neural Networks, Graph Classification, Causal Extrapolation}

\vskip 0.3in
]



\printAffiliationsAndNotice{\icmlEqualContribution} 

\begin{abstract}
In general, graph representation learning methods assume that the train and test data come from the same distribution. 
In this work we consider an underexplored area of an otherwise rapidly developing field of graph representation learning:
The task of out-of-distribution (OOD) graph classification, where train and test data have different distributions, with test data unavailable during training.
Our work shows it is possible to use a causal model to learn approximately invariant representations that better extrapolate between train and test data.
Finally, we conclude with synthetic and real-world dataset experiments showcasing the benefits of representations that are invariant to train/test distribution shifts.
\end{abstract}

\section{Introduction}
\label{introduction}

In general, graph representation learning methods assume that the train and test data come from the same distribution. 
Unfortunately, this assumption is not always valid in real-world deployments~\citep{hu2020open,koh2020wilds,d2020underspecification}.
When the test distribution is different from training, the test data is described as {\em out of distribution (OOD)}.
Differences in train/test distribution may be due to environmental factors such as those related to the way the data is collected or processed.

Particularly, in graph classification tasks, where $\cG$ is the graph and $Y$ its label, we often see different graph sizes and/or distinct arrangements of vertex attributes associated with the same target label.
{\em How should we learn a graph representation for out-of-distribution inductive tasks (extrapolations), where the graphs in training and test (deployment) have distinct characteristics (i.e., ${\rm P}^\text{tr}(\cG) \neq {\rm P}^\text{te}(\cG)$)? }
Are inductive graph neural networks (GNNs) robust to distribution shifts between ${\rm P}^\text{tr}(\cG)$ and ${\rm P}^\text{te}(\cG)$?
If not, is it possible to design a graph classifier that is robust to such OOD shifts without access to samples from ${\rm P}^\text{te}(\cG)$?

\begin{figure}[t!!]
    \centering
    \begin{minipage}{\columnwidth}
    \centering
    \includegraphics[width=3.2in]{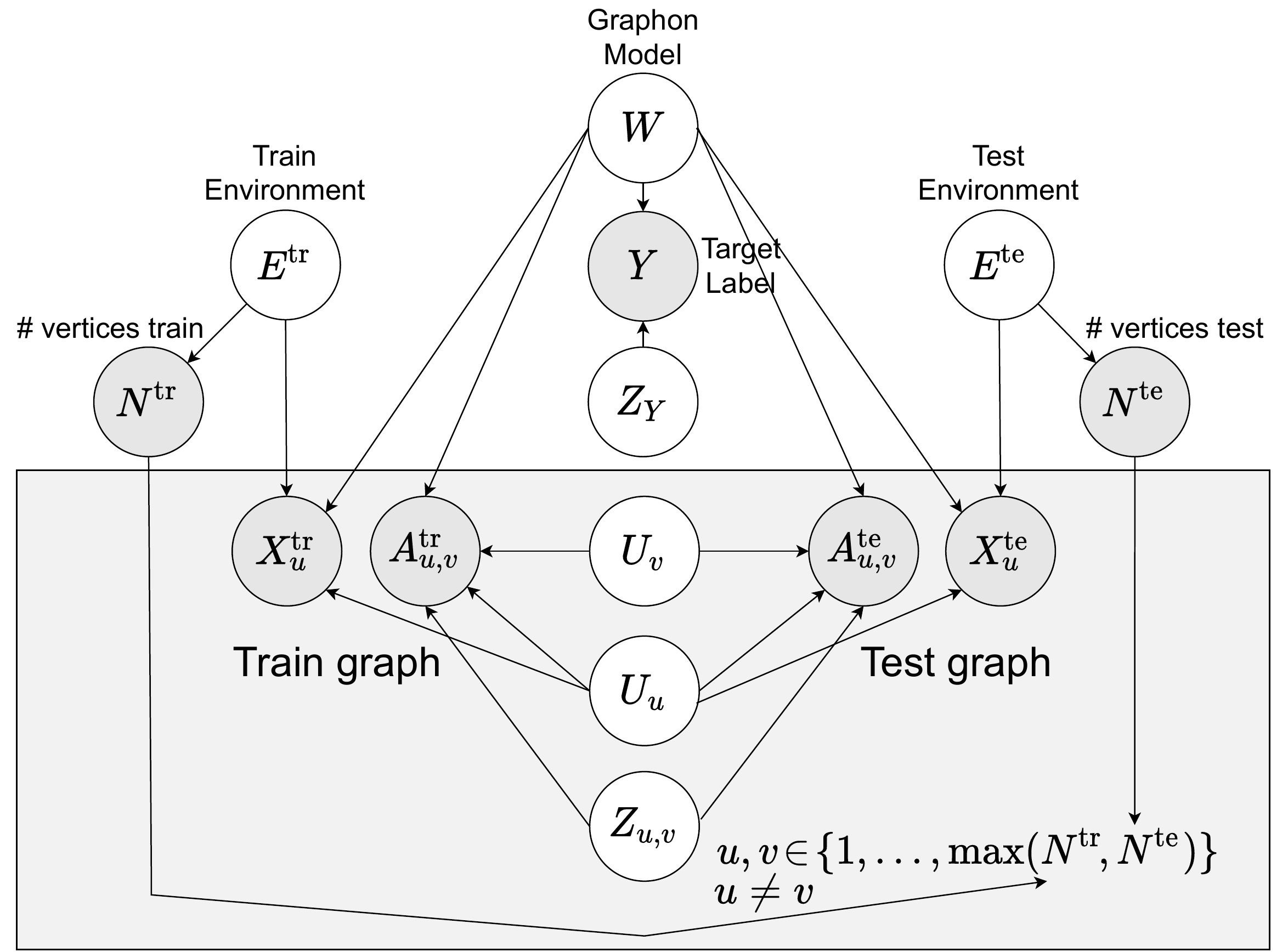}\hfill
    \end{minipage}
    \caption{The twin network DAG~\citep{balke1994twinnets} of our structural causal model (SCM). Gray (resp.\ white) vertices represent observed (resp.\ hidden) random variables.}
    \label{fig:SCM}
\vspace{-10pt}
\end{figure}

In this work we consider an OOD graph classification task with different train and test distributions based on graph sizes and vertex attributes. Our work focuses on simple (no self-loops) undirected graphs with discrete vertex attributes.
We make the common assumption of independence between cause and mechanisms~\citep{bengio2019meta,besserve2018group,johansson2016learning,louizos2017causal,raj2020causal,scholkopf2019causality,arjovsky2019invariant}, which states that ${\rm P}(Y|\cG)$ remains the same between train and test.
We also assume we do not have access to samples from ${\rm P}^\text{te}(\cG)$, hence covariate shift adaptation methods (such as~\citet{yehudai2020size}) are unfit for our scenario. 
In our setting we need to learn to extrapolate from a causal model.
\vspace{-5pt}
\paragraph{Contributions.}
Our contributions are as follows:
\begin{enumerate}[leftmargin=*]
\vspace{-5pt}
    \item We provide a causal model that formally describes a class of graph classification tasks where the training (${\rm P}^\text{tr}(\cG)$) and test (${\rm P}^\text{te}(\cG)$) graphs have different size and vertex attribute distributions. 
    \item Assuming Independence between Cause and Mechanism (ICM)~\citep{louizos2017causal,shajarisales2015telling}, we introduce a graph representation method based on the work of~\citet{lovasz2006limits} and Graph Neural Networks (GNNs)~\citep{Kipf2016, Hamilton2017, pmlr-v97-you19b}
    that is invariant to the train/test distribution shifts of our causal model. 
    Unlike existing invariant representations, this representation can perform extrapolations from single training environment (e.g., all training graphs have the same size).
    \item Our empirical results show that, in most experiments, neither Invariant Risk Minimization (IRM)~\citep{arjovsky2019invariant} nor the GNN extrapolation modifications proposed by~\citet{xu2020neural} are able to perform well in graph classification tasks over the OOD test data.
\end{enumerate}

\section{Graph Classification: A Causal Model Based on Random Graphs}
\label{sec:family}

\paragraph{Out-of-distribution (OOD) shift.} For any joint distribution ${\rm P}(Y,\cG)$ of graphs $\cG$ and labels $Y$, there are infinitely many causal models that give the same joint distribution~\citep{pearl2009causality}.
This phenomenon is known as model underspecification.
Hence, if the training data distribution ${\rm P}^\text{tr}(Y,\cG)$ does not have the same support as the test distribution ${\rm P}^\text{te}(Y,\cG)$, 
a model trained with samples drawn from ${\rm P}^\text{tr}(Y,\cG)$ needs to be able to extrapolate in order to correctly predict ${\rm P}^\text{te}(Y|\cG)$.
In this work, we assume Independence between Cause and Mechanism (ICM): ${\rm P}^\text{tr}(Y|\cG) = {\rm P}^\text{te}(Y|\cG)$, which is a common assumption in the causal deep learning literature~\citep{bengio2019meta,besserve2018group,johansson2016learning,louizos2017causal,raj2020causal,scholkopf2019causality,arjovsky2019invariant}. 

In inductive graph classification tasks, ICM implies that the shift between train and test distributions ${\rm P}^\text{tr}(Y,\cG) \neq {\rm P}^\text{te}(Y,\cG)$ comes from ${\rm P}^\text{tr}(\cG) \neq {\rm P}^\text{te}(\cG)$, since ${\rm P}^\text{tr}(Y|\cG) = {\rm P}^\text{te}(Y|\cG)$.
And because our task is inductive, i.e., no data from ${\rm P}^\text{te}(\cG)$ or a proxy variable, we must make assumptions about the causal mechanisms in order to extrapolate.

\paragraph{Causal model.}
A graph representation that is robust (invariant) to shifts in ${\rm P}^\text{te}(\cG)$ must know how the distribution shifts.
Either we are given some examples from ${\rm P}^\text{te}(\cG)$ (a.k.a.\ covariate shift adaptation~\citep{sugiyama2007covariate}) or we are given a causal structure that describes how the test distribution can shift.
Our paper focuses on the latter by giving a Structural Causal Model (SCM) for the data generation process \update{in \Cref{def:trainG,def:testG}. The definition of the Structural Causal Model (SCM) is needed since the observational probability itself does not provide any causal information (see observational equivalence in~\citet[Theorem 1.2.8]{pearl2009causality}).}
\Cref{fig:SCM} depicts the Directed Acyclic Graph (DAG) of our causal model. 
It uses the twin network DAGs structure first proposed by~\citet{balke1994twinnets} (see~\citet[Chapter 7.1.4]{pearl2009causality}) in order to define how the test distribution can change.

\newcommand{\supp}{\text{supp}}

In what follows we detail the SCM in \Cref{def:trainG,def:testG}.
Our causal model is inspired by Stochastic Block Models (SBMs)~\citep{diaconis1981statistics,snijders1997estimation} and their connection to graphon random graph models~\citep{airoldi2013stochastic,lovasz2006limits}:
\begin{definition}[Training Graph $\cG^\text{tr}_{N^\text{tr}}$] \label{def:trainG}
The training graph SCM is depicted at the left side of the twin network DAG in \Cref{fig:SCM}.
\begin{itemize}[leftmargin=*]
\item The training graph is characterized by a graphon $W \sim {\rm P}(W)$, where $W:[0,1]^2\rightarrow[0,1]$ is a random symmetric measurable function~\citep{lovasz2006limits} sampled (according to some distribution) from $\sD_W$, the set of all symmetric measurable functions on $[0,1]^2\rightarrow[0,1]$.
$W$ defines both the graph's target label and some of its structural and attribute characteristics, but $W$ is unknown.
\item The {\bf training environment} $E^\text{tr} \sim {\rm P}^\text{tr}(E)$ is a hidden environment variable that represents specific graph properties that change between the training and test.
$E^\text{tr}\in\mathbb{E}$ for some properly defined environment space $\mathbb{E}$.
\item The graph's size is determined by its environment $N^\text{tr} := \eta(E^\text{tr})$, where $\eta$ is an unknown deterministic function.
\item The graph's target label is given by $Y := h(\Wvar,Z_Y)$, $Y\in \sY$, with $\sY$ some properly defined discrete target space. $Z_Y$ is an independent random noise variable and $h$ is a deterministic function on the input space $\sD_W\times \sR$.
\item The vertices are numbered $V^\text{tr}=\{1,\ldots,N^\text{tr}\}$. Each vertex $v \in V^\text{tr}$ has an associated hidden variable $U_v \sim \text{Uniform}(0,1)$ sampled i.i.d..
The graph is undirected and its 
adjacency matrix $A^\text{tr}\in \{0,1\}^{N^\text{tr}\times N^\text{tr}}$ is defined by
\begin{equation} \label{eq:Xtr}
 A_{u,v}^\text{tr} \!:= \mathds{1}(Z_{u,v}\!>\!W(U_u,U_v)), \forall u,v \in V^\text{tr}, u\!\neq\!v.
\end{equation}
The diagonals are set to $0$ because there is no self-loop. Here $\mathds{1}$ is an indicator function, and $\{Z_{u,v}=Z_{v,u}\}_{u,v\in V^\text{tr}}$ are independent uniform noises on $[0,1]$.

\item The graph may contain discrete vertex attributes $X^\text{tr}\in \sX^{N^\text{tr}}$ defined as $$X_{v}^\text{tr} := g_X(E^\text{tr}, W(U_v,U_v)), \:\forall v\in V^\text{tr},$$
where $X_{v}^\text{tr}\in \sX$, and $\sX$ is some properly defined attribute space. $g_X$ is a deterministic function that determines a vertex attribute using $W(U_v,U_v) \in [0,1]$ via, say, inverse sampling~\citep{tweedie1945inverse} the vertex attribute distribution.

\item Then, the training graph is $$\cG^\text{tr}_{N^\text{tr}} :=(A^\text{tr},X^\text{tr}).$$
\end{itemize}
\end{definition}

The test data comes from the following (coupled) distribution, that is, the model uses some of the same random variables of the training graph model, effectively only replacing $E^\text{tr}$ by $E^\text{te}$, as shown in the DAG
of \Cref{fig:SCM}.

\begin{definition}[Test Graph $\cG^\text{te}_{N^\text{te}}$]
\label{def:testG}
The SCM of the test graph is given by the right side of the twin network DAG in \Cref{fig:SCM}, changing the following variables from  \Cref{def:trainG}:
\begin{itemize}[leftmargin=*]
\item The {\bf test environment} $E^\text{te} \sim {\rm P}^\text{te}(E)$, and $E^\text{te}\in \mathbb{E}$ belongs to the same space as $E^\text{tr}$. 
It represents specific properties of the graphs that change between the test and training data. Denote $\supp(\cdot)$ $:= \{x | {\rm P}(x) > 0\}$ as the support of a random variable.
The supports of $E^\text{te}$ and $E^\text{tr}$ may not overlap (i.e., $\supp(E^\text{te}) \cap \supp(E^\text{tr}) = \emptyset$).
\item The change in environment from $E^\text{tr}$ to $E^\text{te}$ may change the graph's size as $N^\text{te} := \eta(E^\text{te})$, where $\eta$ is the same unknown deterministic function as in \Cref{def:trainG}.
\item The vertices are numbered $V^\text{te}=\{1,\ldots,N^\text{te}\}$. The adjacency matrix $A^\text{te}\in \{0,1\}^{N^\text{te}\times N^\text{te}}$ is defined as in \Cref{eq:Xtr}.
\item The graph may contain discrete vertex attributes $X^\text{te}\in\sX^{N^\text{te}}$ defined as 
$$X_{v}^\text{te} := g_X(E^\text{te}, W(U_v,U_v)), \:\forall v \in V^\text{te},$$
with $g_X$ as given in \Cref{def:trainG}.
\item Then, the test graph is
$$\cG^\text{te}_{N^\text{te}} :=(A^\text{te},X^\text{te}).$$
\end{itemize}
\end{definition}

Our SCM has a direct connection with graphon random graph model~\citep{lovasz2006limits}, and extend\update{s} it by considering vertex attributes. Now we introduce examples of our graph classification tasks based on \Cref{def:trainG,def:testG} using two classic random graph models.

{\bf Notation:} ($\cG^\wcard_{N^\wcard}, E^\wcard, A^\wcard, V^\wcard, X^\wcard$) In what follows we use the superscript $\wcard$ as a wildcard to describe both train and test random variables. For instance, $\cG^\wcard_{N^\wcard}$ is a variable that is a wildcard for referring to either $\cG^\text{tr}_{N^\text{tr}}$ or $\cG^\text{te}_{N^\text{te}}$.
Also, from now on we define ${\rm P}^\text{te}(\cG) = {\rm P}(\cG^\text{te}_{N^\text{te}})$ and ${\rm P}^\text{tr}(\cG) = {\rm P}(\cG^\text{tr}_{N^\text{tr}})$.

\paragraph{\erdosrenyi example.}
Consider a random training environment $E^\text{tr}$ such that $N^\text{tr} = \eta (E^\text{tr})$ is the number of vertices for graphs in our training data. Let $p$ be the probability that any two distinct vertices of the graph have an edge. Define $W$ as a constant function that always outputs $p$.
Sample independent uniform noises $Z_{u,v} \sim \text{Uniform}(0,1)$ (for each possible edge, $Z_{u,v}=Z_{v,u}$).
An \erdosrenyi graph can be defined as a graph whose adjacency matrix $A^\text{tr}$ is $A^\text{tr}_{u,v} = \mathds{1}(Z_{u,v}>W(U_u,U_v))=\mathds{1}(Z_{u,v}>p)$, $\forall u,v \in V^\text{tr},u\neq v$. Here vertex attributes are not considered and \update{we} can define $X^\text{tr}_v=\O, \forall v\in V^{\text{tr}}$ as the null attribute. 

In the test data, we have a different environment $E^\text{te}$ and graph size $N^\text{te} = \eta (E^\text{te})$, with $\supp(N^\text{te}) \cap \supp(N^\text{tr}) = \emptyset$.
The variable $\{Z_{u,v}\}_{u,v\in \{1,\ldots,\max(\supp(N^\text{tr}) \cup \supp(N^\text{te}))\}}$ 
can be thought as the seed of a random number generator 
to determine if two distinct vertices $u$ and $v$
are connected by an edge.
The above defines our training and test data as a set of \erdosrenyi random graphs of sizes $N^\text{tr}$ and $N^\text{te}$ with probability
$p$.
The targets of the \erdosrenyi graphs can be, for instance, the value $Y = p$ in \Cref{def:trainG}, which is determined by $W$ and invariant to graph sizes.

\paragraph{Stochastic Block Model (SBM)~\citep{snijders1997estimation}.} An SBM can be seen as a generalization of \erdosrenyi graphs.
SBMs partition the vertex set into disjoint subsets $S_1,S_2,...,S_r$ (known as blocks or communities) with an associated $r\times r$ symmetric matrix $\mP$, where the probability of an edge $(u,v)$, $u\in S_i$ and $v\in S_j$ is $\mP_{ij}$, for $i,j\in \{1,\ldots,r\}$. 
In the training and test data, we still have i.i.d sampled $Z_{u,v}=Z_{v,u}$ and different environments $E^\text{tr}$, $E^\text{te}$. 
Divide the interval $[0,1]$ into disjoint convex sets $[t_0,t_1), [t_1,t_2),\ldots,[t_{r-1},t_r]$, where $t_0=0$ and $t_r=1$, such that if $U_v \sim \text{Uniform}(0,1)$ satisfies $U_v \in [t_{i-1},t_i)$, then vertex $v$ belongs to block $S_i$. Thus $W(U_u,U_v)=\sum_{i,j\in \{1,\ldots,r\}}P_{ij}\mathds{1}(U_u \in [t_{i-1},t_i))\mathds{1}(U_v \in [t_{j-1},t_j))$. An SBM graph in training or test can be defined as a graph whose adjacency matrix $A^\wcard$ is $A^\wcard_{u,v} = \mathds{1}(Z_{u,v} > W(U_u,U_v))$, $\forall u,v \in V^\wcard, u\neq v$. Now we have a set of SBM random graphs of sizes $N^\text{tr}$ and $N^\text{te}$ with $\mP$. Consider if there are only two blocks, the target $Y$ can be $\mP_{1,2}$ which is the probability of an edge connecting vertices between the blocks, determined by $W$ and invariant to graph sizes.

\paragraph{SBM with vertex attributes.} For the SBM, assume the vertex attributes are tied to blocks, and are distinct for each block.
The environment variable operates on changing
the distributions of attributes assigned in each block. 
Consider the following SBM example with two blocks:
Define $W(U_v,U_v)=\frac{U_v}{2t_1}\mathds{1}(U_v\in [0,t_1))+ (\frac{1}{2}+\frac{U_v-t_1}{2(1-t_1)})\mathds{1}(U_v\in [t_1,1])$. So $W(U_v,U_v)< \frac{1}{2}$ if and only if $v$ belongs to the first block. We only change the values of $W$ for points on a zero-measure space.
Let $g_X$ be such that it defines constants \update{as} $0 < \alpha_{E^\wcard\!,1} < \frac{1}{2} < \alpha_{E^\wcard\!,2} < 1$, and 
vertex attributes as
$$X^\wcard_v\! =\! g_X(E^\wcard \!,W(U_v,U_v))\!=\!\negthickspace
\begin{bmatrix*}[l]
\mathds{1}(W(U_v,U_v) \! \in \! [0,\alpha_{E^\wcard\!,1}))\\
\mathds{1}(W(U_v,U_v) \! \in \! [\alpha_{E^\wcard\!,1},.5))\\
\mathds{1}(W(U_v,U_v) \! \in \! [.5,\alpha_{E^\wcard\!,2}))\\
\mathds{1}(W(U_v,U_v) \! \in \! [\alpha_{E^\wcard\!,2},1])
\end{bmatrix*}\negthickspace,$$
where the attribute of vertex $v$, $X^\wcard_v$, is one-hot encoded to represent 4 colors: red and blue (if $v$ is in block $1$) and green and yellow (if $v$ is in block 2). 
\section{E-Invariant Graph Representations}
\label{sec:methods}
In this section we discuss shortcomings of traditional graph representation methods for out-of-distribution (OOD) graph classification tasks. We will base our discussion on our Structural Causal Model (SCM) (described in \Cref{def:testG,def:trainG} and \Cref{fig:SCM}).
We show that there is an approximately environment-invariant graph representation that is able to extrapolate to OOD test data.

\paragraph{The shortcomings of standard graph representation methods.}
\Cref{fig:SCM} shows that our target variable $Y$ is a function only of the {\em graphon} variable $\Wvar$, rather than the training or test environments, $E^\text{tr}$ and $E^\text{te}$, respectively.
However, $Y$ is not independent of $E^\text{tr}$ given $\cG^\text{tr}_{N^\text{tr}}$, since both $E^\text{tr}$ and $W$ affect $A^\text{tr}$ and $X^\text{tr}$ (which are colliders), and $Y$ depends on $W$.
Hence, traditional graph representation learning methods can pick up this easy spurious correlation in the training data (via shortcut learning~\citep{geirhos2020shortcut}), which would prevent the model learning the correct OOD test predictor. 

\update{To address the challenge of correctly predicting $Y$ in our OOD test data, regardless of spurious correlations between the variables, we need an estimator that can account for it.
In what follows we focus on {\bf environment-invariant (E-invariant)} graph representations. 
To show the ability of E-invariant representations to extrapolate to OOD test data, we introduce the definition and the effect on downstream OOD classification tasks in the following proposition.} 

\begin{restatable}{proposition}{propEinv}[E-invariant Representation's Effect on \update{OOD} Classification]\label{prop:Einv}
Consider a permutation-invariant graph representation $\Gamma:\cup_{n=1}^\infty \{0,1\}^{n\times n} \times \sX^n \to \sR^d$, $d \geq 1$, and a downstream function $\rho:\sY \times \sR^d \to [0,1]$ (e.g., a feedforward neural network (MLP) with softmax outputs) such that, for some $\epsilon, \delta > 0$, the generalization error over the training distribution is: $\forall y \in \sY$,
\[
{\rm P}(~\vert {\rm P}(Y = y | \cG^\text{tr}_{N^\text{tr}}) - \rho(y,\Gamma(\cG^\text{tr}_{N^\text{tr}})) \vert~ \leq \epsilon) \geq 1 - \delta ,
\] 
$\Gamma$ is said to be {\bf environment-invariant (E-invariant)} if $\forall e \in \supp(E^\text{tr}),\forall e^\dagger \in \supp(E^\text{te})$,
\[
\Gamma(\cG^\text{tr}_{N^\text{tr}}|E^\text{tr}=e)=\Gamma(\cG^\text{te}_{N^\text{te}}|E^\text{te}=e^\dagger).
\]
\update{If $\Gamma$ is E-invariant}, then the OOD test error is the same as the generalization error over the training distribution, i.e., $\forall y \in \sY$,
\begin{equation} \label{eq:predY_Einv}
    {\rm P}(\vert {\rm P}(Y = y| \cG^\text{te}_{N^\text{te}}) -  \rho(y,\Gamma(\cG^\text{te}_{N^\text{te}}))\vert \leq \epsilon) \geq 1 - \delta.
\end{equation}
\end{restatable}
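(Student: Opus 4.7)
The plan is to combine the Independence of Cause and Mechanism (ICM) assumption, which identifies ${\rm P}(Y\mid\cG)$ across train and test as a single function, with the E-invariance hypothesis, which identifies the law of $\Gamma(\cG)$ across environments, and conclude that the random variable $|{\rm P}(Y=y\mid\cG)-\rho(y,\Gamma(\cG))|$ has the same distribution under ${\rm P}^\text{tr}$ and ${\rm P}^\text{te}$. The training tail bound then transfers to the test side verbatim.

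First, by ICM I can write ${\rm P}^\text{tr}(Y=y\mid\cG)={\rm P}^\text{te}(Y=y\mid\cG)=:f(y,\cG)$ as one measurable function of the graph, so the hypothesis of the proposition reads ${\rm P}(|f(y,\cG^\text{tr}_{N^\text{tr}})-\rho(y,\Gamma(\cG^\text{tr}_{N^\text{tr}}))|\le\epsilon)\ge 1-\delta$, and \eqref{eq:predY_Einv} is the analogous statement on the test side.

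Next, I would exploit the twin-network structure of \Cref{fig:SCM}. There, $W$, $Z_Y$, and the latent noise $\{U_v\}$, $\{Z_{u,v}\}$ are shared across the two sides while only $E^\text{tr}\neq E^\text{te}$ differs. Because $W\perp E$ and $Z_Y\perp(W,E)$ in the DAG, the variable $Y=h(W,Z_Y)$ is literally the same on both sides of the coupling. Reading E-invariance conditionally on $W$ (which is legal because $W\perp E$), the conditional law of $\Gamma(\cG^\wcard_{N^\wcard})$ given $W=w$ does not depend on whether the $\wcard$ side is train or test. Marginalizing $W\sim{\rm P}(W)$ (identical on both sides) gives equality of the joint law of $(Y,\Gamma(\cG^\wcard_{N^\wcard}))$ in train and test; extending to equality of $(f(y,\cG^\wcard_{N^\wcard}),\Gamma(\cG^\wcard_{N^\wcard}))$ follows because $f(y,\cG)$ depends on $\cG$ only through the posterior of $W$ (since $Y\perp\cG\mid W$), which is governed by the same noise on both sides of the twin network.

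Finally, since $|f(y,\cG)-\rho(y,\Gamma(\cG))|$ is a deterministic function of the pair $(f(y,\cG),\Gamma(\cG))$, its distribution is the same on the train and test sides, and the assumed training tail bound carries over to \eqref{eq:predY_Einv}. The main obstacle is the lifting step in the previous paragraph---passing from the marginal E-invariance of $\Gamma$ to a joint invariance together with $f(y,\cG)={\rm P}(Y=y\mid\cG)$---which requires reading E-invariance conditionally on $W$ in the twin-network coupling rather than just marginally. Formalizing this conditional reading (for example via a $d$-separation argument on the twin DAG) is the technical crux.
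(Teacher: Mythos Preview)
Your proposal is correct and follows the same route as the paper---both lean on the twin-network coupling of \Cref{fig:SCM}---but the paper's execution is shorter because it reads the E-invariance condition as a \emph{pathwise} (almost-sure) equality on that coupling rather than as a statement about conditional laws. Since $W$, $Z_Y$, $\{U_v\}$, $\{Z_{u,v}\}$ are shared and only $E^\text{tr},E^\text{te}$ differ, E-invariance gives $\Gamma(\cG^\text{tr}_{N^\text{tr}})=\Gamma(\cG^\text{te}_{N^\text{te}})$ a.s., hence $\rho(y,\Gamma(\cG^\text{tr}_{N^\text{tr}}))=\rho(y,\Gamma(\cG^\text{te}_{N^\text{te}}))$ a.s.; and because $Y=h(W,Z_Y)$ depends only on shared variables while the two graphs differ only through the environment, the paper argues ${\rm P}(Y=y\mid\cG^\text{tr}_{N^\text{tr}})={\rm P}(Y=y\mid\cG^\text{te}_{N^\text{te}})$ a.s.\ as well. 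The random variable inside the tail bound is then literally identical on both sides of the coupling, so the probabilities coincide. Your ``lifting obstacle''---upgrading marginal invariance of $\Gamma$ to joint invariance with $f(y,\cG)$---dissolves under this pathwise reading, and the conditional-law/marginalization machinery you sketch is unnecessary.
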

\Cref{prop:Einv} shows that an E-invariant representation will perform no worse on the OOD test data (extrapolation
samples from $(Y,\cG^\text{te}_{N^\text{te}})$) than on a test dataset having the same environment distribution as the training data (samples from $(Y,\cG^\text{tr}_{N^\text{tr}})$). {\em Our task now becomes finding an E-invariant graph representation $\Gamma$ that can be used to predict $Y$.}

\paragraph{The shortcomings of Invariant Risk Minimization (IRM).}
Invariant Risk Minimization (IRM)~\citep{arjovsky2019invariant} aims to learn a representation that is invariant across all training environment\update{s}, $\forall e \in \supp(E^\text{tr})$, by adding a regularization penalty on the empirical risk. However, IRM will fail if: (i) $\supp(E^\text{te}) \not \subseteq \supp(E^\text{tr})$, since the penalty provides no guarantee that the representation will still be invariant w.r.t.\ $e^\dagger \in \supp(E^\text{te})\backslash \supp(E^\text{tr})$ if the representation is a nonlinear function of the input~\citep{rosenfeld2020risks}; and
(ii) if the training data only contains a single environment, i.e., $\supp(E^\text{tr})=\{e\}$. For instance, the training data may contain only graphs of a single size. In this case, we are unable to apply IRM for size extrapolations.
Our experiments show that the IRM procedure 
does not seem to work for graph representation learning.

In what follows we leverage the stability of subgraph densities (more precisely, induced homomorphism densities) in graphon random graph models~\citep{lovasz2006limits} to learn E-invariant representations for the SCM defined in \Cref{def:testG,def:trainG}, whose DAG is illustrated in \Cref{fig:SCM}.

\subsection{An Approximately E-Invariant Graph Representations for Our Model}\label{sec:intuitive}

Let $\cG^\wcard_{N^\wcard}$ denote either an $N^\text{tr}$-sized train or $N^\text{te}$-sized test graph from the SCM in \Cref{def:trainG,def:testG}.
For a given $k$-vertex graph $F_k$ $(k<N^\wcard)$, let $\text{ind}(F_k,\cG^\wcard_{N^\wcard})$ be the number of induced homomorphisms of $F_k$ into $\cG^\wcard_{N^\wcard}$, informally, the number of mappings from $V(F_k)$ to $V(\cG^\wcard_{N^\wcard})$ such that  the corresponding subgraph induced in $\cG^\wcard_{N^\wcard}$ is isomorphic to $F_k$. The induced homomorphism density is defined as
\begin{equation}\label{eq:tinj}
    t_{\rm ind}(F_k,\cG^\wcard_{N^\wcard})=\frac{{\rm ind}(F_k,\cG^\wcard_{N^\wcard})}{N^\wcard!/(N^\wcard - k)!},
\end{equation}
where the denominator is the number of possible mappings. Let $\allpossibleGraphsK$ be the set of all connected vertex-attributed graphs of size $k' \leq k$.
Using the subgraph densities (induced homomorphism densities) $\{t_{\rm ind}(F_{k'},\cG^\wcard_{N^\wcard})\}_{F_{k'} \in \cF_{\leq k}}$ we will construct a (feature vector) representation for $\cG^\wcard_{N^\wcard}$, similar to~\citet{hancock2020survey,pinar2017escape},
  \begin{equation}
  \label{eq:Gamma-1hot}
      \Gamma_\text{1-hot}(\cG^\wcard_{N^\wcard})\! =\!\!\!\!\!\!\! \sum_{\subgraph\in \allpossibleGraphsK} \!\! t_\text{ind}(\subgraph, \cG^\wcard_{N^\wcard}) \vone_\text{one-hot}\{\subgraph, \allpossibleGraphsK\},
  \end{equation}%
 where $\vone_\text{one-hot}\{\subgraph, \allpossibleGraphsK\}$ assigns a unique one-hot vector to each distinct graph $\subgraph$ in $\allpossibleGraphsK$. For instance, for $k=4$, the one-hot vectors could be (1,0,\ldots,0)=\VeeNbrg, (0,1,\ldots,0)=\Kfourbrgg, (0,0,\ldots,1,\ldots,0)=\VeeNbrr, (0,0,\ldots,1)=\TrigNbrg, etc..
In \Cref{sec:conditions} we show that the (feature vector) representation in \Cref{eq:Gamma-1hot} is approximately environment-invariant in our SCM model.
  
An alternative approach is to replace the one-hot vector representation with learnable graph representation models. We first use Graph Neural Networks (GNNs)~\citep{Kipf2016, Hamilton2017, pmlr-v97-you19b} to learn representations that can capture information from vertex attributes. 
Simply speaking, GNNs proceed by vertices passing messages, amongst each other, through a learnable function such as an MLP, and repeating $L \in \mathbb{Z}_{\ge 1}$ layers.

Consider the following simple GNN example. Let $V^\wcard$ be the set of vertices. At each iteration $l \in \{1, 2, \ldots, L\}$, all vertices $v\in V^\wcard$ are associated with a learned vector $\vh_v^{(l)}$. Specifically, we begin by initializing a vector as $\vh_{v}^{(0)} = X_{v}$ for every vertex $v \in V^\wcard$. Then, we recursively compute an update such as the following $\forall v \in V^\wcard$,
\begin{equation}\label{eq:WL1}%
\vh^{(l)}_v \!=\mathrm{MLP}^{(l)} \! \Big(\vh^{(l-1)}_v\!, \text{READOUT}_\text{Neigh}((\vh^{(l-1)}_u)_{u \in \mathcal{N}\!(v)}) \!\Big),
\end{equation}%
where $\mathcal{N}(v) \subseteq V^\wcard$ denotes the neighborhood set of $v$ in the graph, $\text{READOUT}_\text{Neigh}$ is a permutation-invariant function (e.g. sum) of the neighborhood learned vectors, $\mathrm{MLP}^{(l)}$ denotes a multi-layer perceptron and whose superscript $l$ indicates that the MLP at each recursion layer may have different learnable parameters.
There are other alternatives to \Cref{eq:WL1} that we will also test in our experiments.

Then, we arrive to the following
representation of $\cG^\wcard_{N^\wcard}$:
\begin{equation}
\label{eq:Gamma-kgnn}
\begin{aligned}[b]
    &\Gamma_\text{GNN}(\cG^\wcard_{N^\wcard} ) =\\
    &\sum_{\subgraph\in \allpossibleGraphsK} t_\text{ind}(\subgraph,\cG^\wcard_{N^\wcard}) \text{READOUT}_\Gamma(\text{GNN}(\subgraph)),
\end{aligned}
\end{equation}
where
$\text{READOUT}_\Gamma$ is a permutation-invariant function that maps the vertex-level outputs of a GNN to a graph-level representation (e.g. by summing all vertex embeddings). Unfortunately, GNNs are not most-expressive representations of graphs~\citep{morris2019weisfeiler,pmlr-v97-murphy19a,xu2018powerful} and thus $\Gamma_\text{GNN}(\cdot)$ is less expressive than $\Gamma_\text{1-hot}(\cdot)$.  A representation with greater expressive power is
\begin{equation} \label{eq:Gamma-krpgnn}
\begin{aligned}[b]
    &\Gamma_{\text{GNN}^+}(\cG^\wcard_{N^\wcard})=\\
    &\sum_{\subgraph\in \allpossibleGraphsK} t_\text{ind}(\subgraph, \cG^\wcard_{N^\wcard} ) \text{READOUT}_\Gamma(\text{GNN}^+(\subgraph)),
\end{aligned}
\end{equation}
where $\text{GNN}^+$ is a most-expressive $k'$-vertex graph representation, which can be achieved by any of the methods of~\citet{vignac2020building, maron2019provably,pmlr-v97-murphy19a}.
Since $\text{GNN}^+$ is most expressive, $\text{GNN}^+$ can ignore attributes and map each $\subgraph$ to a one-hot vector $\vone_\text{one-hot}\{\subgraph, \allpossibleGraphsK\}$; therefore, $\Gamma_{\text{GNN}^+}(\cdot)$ generalizes $\Gamma_\text{1-hot}(\cdot)$ of \Cref{eq:Gamma-1hot}. But {\em note that greater expressiveness does not imply better extrapolation}.
 
More importantly, GNN and $\text{GNN}^+$ representations allow us to increase their E-invariance by adding a penalty for having different representations of two graphs $F_{k'}$ and $H_{k'}$ with the same topology but different vertex attributes (say, $F_{k'}\!\!=$~\Kfourbrgg and $H_{k'}\!\!=$~\Kfourbrgr), as long as these differences do not significantly impact downstream model accuracy in the training data.
\update{Note that this is more powerful than simply masking vertex attributes, since it allows same-topology graphs with distinct vertex attributes to have different representations if it is important to distinguish them for the target prediction (see \Cref{sec:attribute}).}
We will discuss more about these theoretical underpinnings in the next section.
Hence, for each $k'$-sized vertex-attributed graph $F_{k'}$, we consider the set \(\samestructure\) of all $k'$-sized vertex-attributed graphs having the same underlying topology as $F_{k'}$ but with all possible different vertex attributes.
We then define the regularization penalty
\begin{align}
    \frac{1}{|\allpossibleGraphsK|}&\sum_{\subgraph\in \allpossibleGraphsK}
    \mathbb{E}_{H_{k'} \in \samestructure} \bigl[ \Vert \text{READOUT}_\Gamma(\text{GNN}^{*}(\subgraph)) \nonumber \\
    &- \text{READOUT}_\Gamma(\text{GNN}^{*}(H_{k'})) \Vert_2 \bigr] \label{eq:regul},
\end{align}
where $\text{GNN}^{*} = \text{GNN}$ if we choose the representation $\Gamma_\text{GNN}$, or $\text{GNN}^{*} = \text{GNN}^{+}$ if we choose the representation $\Gamma_{\text{GNN}^+}$. 
In practice, we assume $H_{k'}$ is uniformly sampled from $\samestructure$ and we sample one $H_{k'}$ for each $F_{k'}$
in order to obtain an unbiased estimator of \Cref{eq:regul}.

\paragraph{Practical considerations.}
\update{Efficient algorithms exist to obtain {\em induced} homomorphism densities over all possible {\em connected} $k$-vertex subgraphs~\citep{ahmed2016estimation, bressan2017counting, chen2018mining, chen2016general, rossi2019heterogeneous, wang2014efficiently}.
For unattributed graphs and $k \leq 5$, we use ESCAPE~\citep{pinar2017escape} to obtain {\em exact} densities. For attributed graphs or unattributed graphs with $k > 5$, exact counting becomes intractable, so we use R-GPM~\citep{teixeira2018graph} to obtain unbiased estimates of densities. Finally, \Cref{prop:bias} in \Cref{sec:BiasAppendix} shows that certain biased estimators can also be used if $\text{READOUT}_\Gamma$ is the sum of vertex embeddings.}

\subsection{Theoretical Description of our E-Invariant Graph Representations} 
\label{sec:conditions}

In this section, we show that the graph representations seen in the previous section are approximately environment-invariant in our SCM model under mild assumptions.

\begin{restatable}[Approximate\update{ly} E-invariant Graph Representation]{theorem}{thmsizeExtrapolationBound}
\label{thm:sizeExtrapolationBound}

Let $\cG^\text{tr}_{N^\text{tr}}$ and $\cG^\text{te}_{N^\text{te}}$ be two samples of graphs of sizes ${N^\text{tr}}$ and ${N^\text{te}}$ from the training and test distributions, respectively, both defined over the same graphon variable $W$ and satisfying \Cref{def:trainG,def:testG}.
Assume the vertex attribute function $g_X(\cdot,\cdot)$ of \Cref{def:trainG,def:testG} is invariant to $E^\text{tr}$ and $E^\text{te}$ (the reason for this assumption will be clear later).
Let $||\cdot||_\infty$ denote the $L$-infinity norm.
For any integer $k\leq \min(N^\text{tr},N^\text{te})$, and any constant $0<\epsilon<1$,
\begin{equation}
\begin{aligned}[b]
    {\rm P}(\Vert\Gamma_\text{1-hot}&(\cG^\text{tr}_{N^\text{tr}}) -\Gamma_\text{1-hot}(\cG^\text{te}_{N^\text{te}})\Vert_\infty>\epsilon)\leq \\ &2|\allpossibleGraphsK|(\exp(-\frac{\epsilon^2 N^\text{tr}}{8k^2})+\exp(-\frac{\epsilon^2 N^\text{te}}{8k^2})).
\end{aligned}
\end{equation}
\end{restatable}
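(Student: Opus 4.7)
The approach is to reduce the $L_\infty$ deviation to a supremum over subgraph features, introduce a common deterministic anchor at the graphon level, and then apply a concentration inequality on each side separately.

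First I would observe that each coordinate of $\Gamma_\text{1-hot}$ corresponds to exactly one $F_{k'}\in\mathcal{F}_{\le k}$, so
\begin{equation*}
\|\Gamma_\text{1-hot}(\cG^\text{tr}_{N^\text{tr}}) - \Gamma_\text{1-hot}(\cG^\text{te}_{N^\text{te}})\|_\infty = \max_{F_{k'}\in\mathcal{F}_{\le k}} \bigl|t_\text{ind}(F_{k'},\cG^\text{tr}_{N^\text{tr}}) - t_\text{ind}(F_{k'},\cG^\text{te}_{N^\text{te}})\bigr|.
\end{equation*}
A union bound over $\mathcal{F}_{\le k}$ reduces the task to controlling one fixed $F_{k'}$ at a time, which accounts for the prefactor $|\mathcal{F}_{\le k}|$ in the statement.

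Next I would introduce the graphon-level density $t_\text{ind}(F_{k'},W)$ as a common reference point for both train and test. The assumption that $g_X$ does not depend on the environment makes each vertex attribute $X_v^\wcard = g_X(W(U_v,U_v))$ a deterministic function of $U_v$ alone; combined with the $U_v$'s being i.i.d.\ $\text{Uniform}(0,1)$ and the edge noises $Z_{u,v}$ being independent, this yields the key equality
\begin{equation*}
\mathbb{E}[t_\text{ind}(F_{k'},\cG^\wcard_{N^\wcard}) \mid W] = t_\text{ind}(F_{k'},W)
\end{equation*}
for both $\wcard\in\{\text{tr},\text{te}\}$. A triangle inequality splitting the budget as $\epsilon/2+\epsilon/2$ about this anchor then bounds the probability in the theorem by the sum of the two one-sided tail probabilities conditional on $W$.

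Each of those tail probabilities I would control by McDiarmid's bounded-differences inequality applied to the vector of all $U_v$'s and edge noises $Z_{u,v}$. Perturbing a single $U_v$ can affect the attribute at $v$ and every edge incident to $v$; but since any induced homomorphism uses $v$ in at most $k$ roles and fills the remaining $k-1$ roles from the other $N^\wcard-1$ vertices, the count changes by at most $k\cdot(N^\wcard-1)!/(N^\wcard-k)!$, hence $t_\text{ind}$ by at most $c_v = k/N^\wcard$. Perturbing a single $Z_{u,v}$ flips at most one edge, so $c_{u,v} \le k(k-1)/(N^\wcard(N^\wcard-1))$. Summing squared contributions gives $\sum_i c_i^2 \le k^2/N^\wcard + O(k^4/(N^\wcard)^2)$, and McDiarmid yields a tail of the form $2\exp(-c\,\epsilon^2 N^\wcard/k^2)$ with the constant $c$ giving the $1/8$ in the statement after the $\epsilon/2$ split. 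Summing the train and test tails and multiplying by $|\mathcal{F}_{\le k}|$ from the union bound assembles the claim.

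The main technical obstacle will be the bounded-differences book-keeping, in particular verifying the $c_v\le k/N^\wcard$ constant when a perturbation of $U_v$ simultaneously reshuffles the attribute $X_v^\wcard$ and every adjacent edge. This is precisely where the hypothesis that $g_X$ is invariant to the environment is essential: without it, a change of $E^\wcard$ could modify all attributes globally and break both the common-expectation identity and the single-coordinate Lipschitz bound, invalidating the coupling to $t_\text{ind}(F_{k'},W)$.
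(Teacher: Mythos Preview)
Your proposal is correct and follows essentially the same route as the paper: union bound over $\mathcal{F}_{\le k}$, anchor both sides at the graphon-level induced density, triangle inequality with an $\epsilon/2$ split, and a bounded-differences concentration. The only technical difference is that the paper obtains the concentration via Azuma's inequality on a vertex-exposure Doob martingale $B_m=\mathbb{E}[t_{\text{ind}}(F_k,\cG^\wcard_{N^\wcard})\mid \cG^\wcard_m]$ (revealing the induced subgraph on $\{1,\dots,m\}$ one vertex at a time, with increments bounded by $k/N^\wcard$), whereas you apply McDiarmid directly to the full vector of independent coordinates $(U_v,Z_{u,v})$; the extra $O(k^4/(N^\wcard)^2)$ you pick up from the edge-noise coordinates is lower order and is what the paper's filtration sidesteps to land exactly on the $1/8$ constant.
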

\Cref{thm:sizeExtrapolationBound} shows how the graph representations given in \Cref{eq:Gamma-1hot} are approximately E-invariant. 
Note that for unattributed graphs, we can define $g_X(\cdot,\cdot)=\O$ as the null attribute, which is invariant to any environment by construction. For graphs with attributed vertices, $g_X(\cdot,\cdot)$ being invariant to $E^\text{tr}$ and $E^\text{te}$ means \update{that} for any two environments $e\in \supp(E^\text{tr}), e^\dagger \in \supp(E^\text{te})$, $g_X(e,\cdot)=g_X(e^\dagger,\cdot)$. 

\Cref{thm:sizeExtrapolationBound} shows that for $k \ll \min({N^\text{tr}},{N^\text{te}})$, the representations $\Gamma_\text{1-hot}(\cdot)$ of two possibly different-sized graphs with the same $W$ are nearly identical, \update{indicating}
$\Gamma_\text{1-hot}(\cG^\wcard_{N^\wcard})$ is an approximately E-invariant representation.

\Cref{thm:sizeExtrapolationBound} also exposes a trade-off, however. 
If the observed graphs tend to be relatively small, the required $k$ for approximately E-invariant representations can be small, and then the expressiveness of $\Gamma_\text{1-hot}(\cdot)$ gets compromised.
That is, the ability of $\Gamma_\text{1-hot}(\cG^\wcard_{N^\wcard})$ to extract information about $W$ from $\cG^\wcard_{N^\wcard}$ reduces as $k$ decreases. Finally, this guarantees that for appropriate $k$, passing the representation $\Gamma_\text{1-hot}(\cG^\wcard_{N^\wcard})$ to a downstream classifier provably approximates the classifier in \Cref{eq:predY_Einv} of \Cref{prop:Einv}.

Note that when the vertex attributes are not invariant to the environment variable, $\Gamma_\text{1-hot}(\cdot)$ is not E-invariant and we can not extrapolate using $\Gamma_\text{1-hot}(\cdot)$.
Thankfully, for the GNN-based graph representations $\Gamma_\text{GNN}(\cG^\wcard_{N^\wcard})$ and $\Gamma_{\text{GNN}^+}(\cG^\wcard_{N^\wcard})$ in \Cref{eq:Gamma-kgnn,eq:Gamma-krpgnn}, respectively, the regularization penalty in \Cref{eq:regul} pushes the graph representation 
to be more E-invariant, making it more likely to satisfy the conditions of E-invariance in \Cref{thm:sizeExtrapolationBound}.
\Cref{eq:regul} is inspired by the {\em asymmetry learning} procedure of \citet{mouli2021neural}, which induces symmetry priors in the neural network, which can be broken (making the neural network asymmetric) only when imposing the symmetry significantly increases the training loss.

To understand the effect of our {\em asymmetry learning} in regularizing towards topology, consider the attributed SBM example in \Cref{sec:family}.
The environment operates by changing the distributions of attributes assigned within each block. If we are going to achieve E-invariance (and correctly predict cross-block edge probabilities in the test data (see \Cref{sec:attribute})), we need graph representations that \update{treat} attributes assigned to the same block as equivalent.
By regularizing the GNN-based graph representations towards focusing only on topology rather than vertex attributes, the regularization forces the GNN to treat all within-block vertex attributes as equivalent, and achieve an approximately E-invariant representation in this setting.
And since treating the across-block vertex attributes as equivalent hurts the training loss in this setting, these will not be considered equivalent by the GNN.
%
%
\section{Related Work}\label{sec:relatedwork}

%
This section presents an overview of the related work. Due to space constraints, a more in-depth discussion with further references is given in \Cref{sec:RelatedWorkAppendix}.

\vspace{-5pt}
\paragraph{OOD extrapolation in graph classification and size extrapolation in GNNs.}
Our work ascertain\update{s} a causal relationship between graphs and their target labels. We are unaware of existing work on this topic.
\citet{xu2020neural} is interested on a geometric (non-causal) definition of extrapolation for a class of graph algorithms.
\citet{hu2020open} introduces a large graph dataset presenting significant challenges of OOD extrapolation, however, their shift is on the two-dimensional structural framework distribution of the molecules, and no causal model is provided.
The parallel work of~\citet{yehudai2020size} improves size extrapolation in GNNs using self-supervised and semi-supervised learning on both the training and test domain, which is orthogonal to our problem. 
Previous works also examine empirically the ability of graph neural networks to extrapolate in various applications, such as physics~\citep{battaglia2016interaction,sanchez2018graph}, mathematical and abstract reasoning~\citep{santoro2018measuring,saxton2018analysing}, and graph algorithms~\citep{bello2016neural,nowak2017note,battaglia2018relational,joshi2020learning, velivckovic2019neural, hao2020towards}.
These works do not provide guarantees of test extrapolation performance, a causal model, or a proof that the tasks require extrapolation over different environments.

\vspace{-5pt}
\paragraph{Causal reasoning and invariances.}
Recent efforts have brought counterfactual inference to machine learning models, including {\em Independence of causal mechanism (ICM)} methods~\citep{bengio2019meta,besserve2018group,johansson2016learning,louizos2017causal,parascandolo2018learning,raj2020causal,scholkopf2019causality}, {\em Causal Discovery from Change (CDC)} methods~\citep{tian2001causal}, and {\em representation disentanglement} methods~\citep{bengio2019meta,goudet2017causal,locatello2019challenging}. 
Invariant risk minimization (IRM)~\citep{arjovsky2019invariant} is a type of ICM~\citep{scholkopf2019causality}. \update{Risk
Extrapolation (REx)~\citep{krueger2021outofdistribution} optimizes by focusing on the training environments that have the largest impact on training.}

Broadly, the above efforts look for representations (or mechanism descriptions) that are invariant across multiple environments observed in the training data.
In our work, we are interested in techniques that can work with a single training environment \update{and when the test support is not a subset of the train support} --- a common case in graph data.
To the best of our knowledge, the only representation learning work considering single environment extrapolations is \citet{mouli2021neural}.
However, none of these methods is specifically designed for graphs, and it is unclear how they can be efficiently adapted for graph tasks.
Finally, we also note that domain adaptation techniques and recent work on domain-predictors~\citep{chuang2020estimating} aim to learn invariances that can be used for the predictions. However, these require access to test data during training, which is not our scenario.

\vspace{-5pt}
\paragraph{Graph classification using induced homomorphisms.} 
A related set of works looks at induced homomorphism densities as graph features for a kernel~\citep{shervashidze2009efficient,yanardag2015deep,wale2008comparison}. These methods can perform poorly in some tasks~\citep{kriege2018property}. \update{Recent work has also shown an interest in induced subgraphs, which are used to improve predictions of GNNs~\citep{bouritsas2020improving} or treated as inputs for newly-proposed architectures~\citep{toenshoff2021graph}. Also note that the graph representations $\Gamma_\text{GNN}(\cdot)$ and $\Gamma_{\text{GNN}^+}(\cdot)$ in \Cref{eq:Gamma-kgnn,eq:Gamma-krpgnn} respectively,
have similarities to $k$-ary Relational Pooling~\citep{pmlr-v97-murphy19a} with the main difference being that the subgraph representations are weighted in our case.}
None of these methods focus on invariant representations or extrapolations.

\vspace{-5pt}
\paragraph{Expressiveness of graph representations.}
The expressiveness of a graph representation method is a measure of model family bias~\citep{morris2019weisfeiler,xu2018powerful,gartner2003graph,maron2019provably,pmlr-v97-murphy19a}.
That is, given enough training data, a neural network from a more expressive family can achieve smaller generalization error over the training distribution than a neural network from a less expressive family, assuming appropriate optimization.
However, this power is a measure of generalization capability over the training distribution, not OOD extrapolation.
Hence, the question of representation expressiveness is orthogonal to our work.

\begin{table*}[t]
	\caption{\small Extrapolation performance over {\em unattributed} graphs {\bf shows clear advantage of our environment-invariant representations, with or without GNN, over standard methods or IRM in extrapolation test accuracy}. Table shows mean (standard deviation) accuracy. Bold emphasises the best test average. NA value indicates IRM is not applicable (when training data has a single graph size).}
	\label{tab:unattributed}
	\begin{small}
	\begin{sc}
	\begin{center}
	\resizebox{\textwidth}{!}{
		\begin{tabular}{lrrr||rrr|rrr}
			& \multicolumn{3}{c}{Accuracy in Schizophrenia Task}
			&
			\multicolumn{6}{c}{Accuracy in \erdosrenyi Task}
			\\
			\cmidrule(lr){2-4} \cmidrule(lr){5-10}
		    &
		    \multicolumn{3}{c}{\textbf{Training has a single graph size}}
		    &
			\multicolumn{3}{c}{\textbf{Training has a single graph size}}
			&
			\multicolumn{3}{c}{\textbf{Training has two graph sizes}} \\
			\cmidrule(lr){2-4} \cmidrule(lr){5-7} \cmidrule(lr){8-10}
		 &
		 \multicolumn{1}{c}{Train [$P(Y,\cG^\text{tr}_{N^\text{tr}})$]} & \multicolumn{1}{c}{Val.\ [$P(Y,\cG^\text{tr}_{N^\text{tr}})$]} & \multicolumn{1}{c}{\textbf{Test} ($\uparrow$) [$P(Y,\cG^\text{te}_{N^\text{te}})$]}
		 & 
		 \multicolumn{1}{c}{Train [$P(Y,\cG^\text{tr}_{N^\text{tr}})$]} & \multicolumn{1}{c}{Val.\ [$P(Y,\cG^\text{tr}_{N^\text{tr}})$]} & \multicolumn{1}{c}{\textbf{Test} ($\uparrow$) [$P(Y,\cG^\text{te}_{N^\text{te}})$]}
		 &
		 \multicolumn{1}{c}{Train [$P(Y,\cG^\text{tr}_{N^\text{tr}})$]} & \multicolumn{1}{c}{Val.\ [$P(Y,\cG^\text{tr}_{N^\text{tr}})$]} & \multicolumn{1}{c}{\textbf{Test} ($\uparrow$) [$P(Y,\cG^\text{te}_{N^\text{te}})$]}
		 \\
			 \cmidrule(lr){2-4} \cmidrule(lr){5-7} \cmidrule(lr){8-10}
			    PNA & 0.99 (0.00) & 0.76 (0.08) & 0.61 (0.08)
			        & 1.00 (0.00) & 1.00 (0.00) & 0.65 (0.12)
			        & 1.00 (0.00) & 1.00 (0.00) & 0.64 (0.12) \\
                PNA (mean xu-readout) & 0.99 (0.00) & 0.77 (0.07) & 0.53 (0.10)
                                & 1.00 (0.00) & 1.00 (0.00) & 0.62 (0.12)
                                & 1.00 (0.00) & 1.00 (0.00) & 0.51 (0.19) \\
                PNA (max xu-readout) & 0.99 (0.00) & 0.75 (0.07) & 0.42 (0.06)
                                & 1.00 (0.00) & 1.00 (0.00) & 0.59 (0.16)
                                & 0.99 (0.01) & 1.00 (0.00) & 0.57 (0.15) \\
                PNA + IRM & NA & NA & NA
                & NA & NA & NA
                & 1.00 (0.00) & 1.00 (0.00) & 0.65 (0.13) \\
			 	GCN & 0.74 (0.04) & 0.74 (0.08) & 0.55 (0.09)
			 	    & 0.99 (0.01) & 1.00 (0.00) & 0.88 (0.10)
			 	    & 0.98 (0.01) & 1.00 (0.00) & 0.87 (0.10) \\
			 	GCN (mean xu-readout) & 0.72 (0.04) & 0.73 (0.08) & 0.65 (0.08)
			 	                & 0.99 (0.01) & 1.00 (0.00) & 0.79 (0.15)
			 	                & 0.98 (0.02) & 1.00 (0.00) & 0.75 (0.20) \\
			 	GCN (max xu-readout) & 0.86 (0.07) & 0.75 (0.07) & 0.54 (0.06)
			 	                & 0.99 (0.01) & 1.00 (0.00) & 0.90 (0.07)
			 	                & 0.96 (0.04) & 1.00 (0.00) & 0.87 (0.09) \\
			 	 GCN + IRM  & NA & NA & NA
			 	            & NA & NA & NA
                            & 0.98 (0.02) & 1.00 (0.00) & 0.88 (0.08) \\
				GIN  & 0.72 (0.02) & 0.74 (0.05) & 0.36 (0.09)
				    & 1.00 (0.00) & 1.00 (0.00) & 0.64 (0.12)
				    & 1.00 (0.00) & 1.00 (0.00) & 0.65 (0.12) \\
				GIN (mean xu-readout) & 0.78 (0.02) & 0.72 (0.05) & 0.43 (0.05)
				                & 1.00 (0.00) & 1.00 (0.00) & 0.63 (0.09)
				                & 1.00 (0.00) & 1.00 (0.00) & 0.61 (0.09) \\
			 	GIN (max xu-readout) & 0.85 (0.02) & 0.72 (0.05)  & 0.35 (0.06)
			 	                & 0.99 (0.01) & 1.00 (0.00) & 0.65 (0.12)
			 	                & 1.00 (0.00) & 1.00 (0.00) & 0.65 (0.07) \\
				GIN + IRM  & NA & NA & NA
				            & NA & NA & NA
				            & 1.00 (0.00) & 1.00 (0.00) & 0.66 (0.08) \\
                RPGIN & 0.70 (0.02) & 0.74 (0.05) & 0.37 (0.06)
                        & 1.00 (0.00) & 1.00 (0.00) & 0.61 (0.16)
                        & 1.00 (0.00) & 1.00 (0.00) & 0.60 (0.16) \\
                \hline
				WL Kernel  & 1.00 (0.00) & 0.63 (0.07) & 0.40 (0.00)
				            & 1.00 (0.00) & 1.00 (0.00) & 0.01 (0.00)
				            & 1.00 (0.00) & 1.00 (0.00) & 0.30 (0.00) \\
				GC Kernel  & 0.61 (0.00) & 0.61 (0.06) & 0.60 (0.00)
				            & 1.00 (0.00) & 1.00 (0.00) & \textbf{1.00 (0.00)}
				            & 1.00 (0.00) & 1.00 (0.00) & \textbf{1.00 (0.00)} \\
				\hline
				$\Gamma_\text{1-hot}$ & 0.71 (0.01) & 0.72 (0.05) & \textbf{0.72 (0.04)} 
				        & 1.00 (0.00) & 1.00 (0.00) & \textbf{1.00 (0.00)}
				        & 1.00 (0.00) & 1.00 (0.00) & \textbf{1.00 (0.00)} \\
				$\Gamma_\text{GIN}$  & 0.75 (0.05) & 0.70 (0.04) & \textbf{0.68 (0.07)}
				        & 1.00 (0.00) & 1.00 (0.00) & \textbf{1.00 (0.00)}
				        & 1.00 (0.00) & 1.00 (0.00) & \textbf{1.00 (0.00)} \\
				$\Gamma_\text{RPGIN}$  & 0.69 (0.01) & 0.71 (0.06) & \textbf{0.71 (0.03)}
				        & 1.00 (0.00) & 1.00 (0.00) & \textbf{1.00 (0.00)}
				        & 1.00 (0.00) & 1.00 (0.00) & \textbf{1.00 (0.00)} \\
				\bottomrule
		\end{tabular}
		}
	\end{center}
    \end{sc}
    \end{small}
\vspace{-15pt}
\end{table*}

\begin{table*}[t]
	\caption{\small  Extrapolation performance over {\em attributed} graphs {\bf shows clear advantage of environment-invariant representations with GNNs and the attribute regularization in \Cref{eq:regul}}.
	Table shows mean (standard deviation) accuracy. Bold emphasises the best test average. NA value indicates IRM is not applicable (when training data has a single graph size).}
	\label{tab:attr}
    \begin{small}
	\begin{sc}
	\begin{center}
	\resizebox{\textwidth}{!}{
		\begin{tabular}{lrrr|rrr|rrr}
		    &
			\multicolumn{3}{c}{\textbf{Training has a single graph size 20}}
			&
			\multicolumn{3}{c}{\textbf{Training has two graph sizes: 14 and 20}}
			&
			\multicolumn{3}{c}{\textbf{Training has two graph sizes: 20 and 30}} \\
			\cmidrule(lr){2-4} \cmidrule(lr){5-7} \cmidrule(lr){8-10}
		 & \multicolumn{1}{c}{Train [$P(Y,\cG^\text{tr}_{N^\text{tr}})$]} & \multicolumn{1}{c}{Val.\ [$P(Y,\cG^\text{tr}_{N^\text{tr}})$]} & \multicolumn{1}{c}{\textbf{Test} ($\uparrow$) [$P(Y,\cG^\text{te}_{N^\text{te}})$]}
		 &
		 \multicolumn{1}{c}{Train [$P(Y,\cG^\text{tr}_{N^\text{tr}})$]} & \multicolumn{1}{c}{Val.\ [$P(Y,\cG^\text{tr}_{N^\text{tr}})$]} & \multicolumn{1}{c}{\textbf{Test} ($\uparrow$) [$P(Y,\cG^\text{te}_{N^\text{te}})$]}
		 &
		 \multicolumn{1}{c}{Train [$P(Y,\cG^\text{tr}_{N^\text{tr}})$]} & \multicolumn{1}{c}{Val.\ [$P(Y,\cG^\text{tr}_{N^\text{tr}})$]} & \multicolumn{1}{c}{\textbf{Test} ($\uparrow$) [$P(Y,\cG^\text{te}_{N^\text{te}})$]}
		 \\
		\cmidrule(lr){2-4} \cmidrule(lr){5-7} \cmidrule(lr){8-10}
		PNA & 1.00 (0.00) & 1.00 (0.00) & 0.65 (0.10)
		    & 0.96 (0.06) & 0.94 (0.03) & 0.57 (0.19)
		    & 0.99 (0.01) & 1.00 (0.00) & 0.69 (0.19) \\
		PNA (mean xu-readout) & 1.00 (0.00) & 1.00 (0.00) & 0.86 (0.13)
		                & 0.97 (0.02) & 0.95 (0.02) & 0.64 (0.11)
		                & 0.99 (0.01) & 1.00 (0.00) & 0.70 (0.15) \\
		PNA (max xu-readout) & 0.99 (0.01) & 0.97 (0.02) & 0.83 (0.13)
		                & 0.94 (0.04) & 0.93 (0.03) & 0.80 (0.12)
		                & 0.95 (0.05) & 0.95 (0.05)  & 0.80 (0.15) \\
		PNA + IRM & NA & NA & NA
		             & 0.95 (0.05) & 0.94 (0.03) & 0.58 (0.19)
		             & 0.99 (0.01) & 1.00 (0.00) & 0.70 (0.20) \\
		GCN & 0.99 (0.01) & 0.98 (0.02) & 0.62 (0.09)
		    & 0.95 (0.02) & 0.96 (0.02) & 0.55 (0.17)
		    & 1.00 (0.00) & 1.00 (0.00) & 0.73 (0.17) \\
		GCN (mean xu-readout) & 0.94 (0.03) & 0.99 (0.01) & 0.61 (0.12)
		                & 0.93 (0.05) & 0.94 (0.02) & 0.69 (0.20)
		                & 1.00 (0.00) & 1.00 (0.00) & 0.84 (0.13) \\
		GCN (max xu-readout) & 0.99 (0.01) & 1.00 (0.00) & 0.76 (0.07)
		                & 0.95 (0.04) & 0.98 (0.02) & 0.61 (0.17)
		                & 0.98 (0.02) & 1.00 (0.00) & 0.70 (0.20) \\
		GCN + IRM & NA & NA & NA
		            & 0.93 (0.05) & 0.97 (0.03) & 0.65 (0.19)
		            & 1.00 (0.00) & 1.00 (0.00) & 0.84 (0.17) \\
		GIN & 0.97 (0.02) & 1.00 (0.00) & 0.64 (0.17)
		    & 0.95 (0.03) & 0.96 (0.04) & 0.66 (0.20)
		    & 0.98 (0.02) & 1.00 (0.00) & 0.74 (0.19) \\
		GIN (mean xu-readout) & 1.00 (0.00) & 1.00 (0.00) & 0.85 (0.14)
		                & 0.97 (0.01) & 0.99 (0.01) & 0.75 (0.18)
		                & 0.99 (0.01) & 1.00 (0.00) & 0.80 (0.15) \\
		GIN (max xu-readout) & 0.95 (0.02) & 0.97 (0.03) & 0.67 (0.18) 
		                & 0.93 (0.06) & 0.94 (0.03) & 0.67 (0.17)
		                & 0.99 (0.01) & 1.00 (0.00) & 0.69 (0.15) \\
		GIN + IRM & NA & NA & NA
		            & 0.95 (0.03) & 0.97 (0.04) & 0.64 (0.19)
		            & 0.98 (0.02) & 1.00 (0.00) & 0.75 (0.19) \\
		RPGIN & 0.98 (0.02) & 1.00 (0.00) & 0.49 (0.15)
		        & 0.96 (0.03) & 0.99 (0.01) & 0.54 (0.12)
		        & 0.99 (0.01) & 1.00 (0.00) & 0.50 (0.13) \\
		\hline
		WL Kernel & 1.00 (0.00) & 0.95 (0.00) & 0.57 (0.00)
		            & 0.99 (0.00) & 0.90 (0.00) & 0.62 (0.00)
		            & 1.00 (0.00) & 1.00 (0.00) & 0.57 (0.00) \\
		GC Kernel & 1.00 (0.00) & 0.90 (0.00) & 0.43 (0.00)
		            & 1.00 (0.00) & 0.80 (0.00) & 0.43 (0.00)
		            & 0.99 (0.00) & 0.90 (0.00) & 0.43 (0.00) \\
		\hline
		$\Gamma_\text{1-hot}$ & 1.00 (0.00) & 0.90 (0.00) & 0.50 (0.07) 
		                        & 0.97 (0.03) & 0.85 (0.05) & 0.50 (0.07)
		                        & 0.98 (0.00) & 0.96 (0.02) & 0.45 (0.05) \\
		$\Gamma_\text{GIN}$ & 1.00 (0.00)  & 1.00 (0.00) & \textbf{0.98 (0.02)}
		                    & 0.96 (0.02)  & 0.95 (0.01) & \textbf{0.95 (0.06)}
		                    & 1.00 (0.00) & 1.00 (0.00) & 0.88 (0.12)  \\
		$\Gamma_\text{RPGIN}$ & 1.00 (0.00) & 1.00 (0.00) & \textbf{1.00 (0.00)}
		                        & 0.97 (0.03) & 0.95 (0.02) & \textbf{0.95 (0.05)}
		                        & 1.00 (0.00) & 1.00 (0.00) & \textbf{0.93 (0.05)} \\
		\bottomrule
		\end{tabular}
		}
	\end{center}
	\end{sc}
	\end{small}
\vspace{-10pt}
\end{table*}

\section{Empirical Results}
\label{sec:experiments}
This section is dedicated to the empirical evaluation of our theoretical claims, including the ability of the representations
in \Cref{eq:Gamma-1hot,eq:Gamma-kgnn,eq:Gamma-krpgnn} to extrapolate as predicted by \Cref{prop:Einv} for tasks
that abide by \Cref{def:trainG,def:testG}. Due to space constraints, our results are summarised here, while further details are relegated to \Cref{sec:ExperimentAppendix}.
\update{Our code is also available\footnote{\small \url{https://github.com/PurdueMINDS/size-invariant-GNNs}}}.

We explore the extrapolation power of $\Gamma_\text{1-hot}$, $\Gamma_\text{GIN}$ and $\Gamma_\text{RPGIN}$ of \Cref{eq:Gamma-1hot,eq:Gamma-kgnn,eq:Gamma-krpgnn} using the Graph Isomorphism Network (GIN)~\citep{xu2018powerful} as our base GNN model, and Relational Pooling GIN (RPGIN)~\citep{pmlr-v97-murphy19a} as a more expressive GNN. The graph representations are then passed to a $L$-hidden layer feedforward neural network (MLP) with softmax outputs that give the predicted classes, $L\in\{0, 1\}$.
\update{As described in~\Cref{sec:intuitive}, we obtain induced homomorphism densities of {\em connected} graphs. For practical reasons, we focus only on densities of graphs of size {\em exactly} $k$, which is treated as a hyperparameter. Note that the number of parameters for our $\Gamma_\text{GNN}$ and $\Gamma_{\text{GNN}^+}$ does not depend on $k$ (for $\Gamma_\text{1-hot}$ it does), and the forward pass on
the $k$-sized graphs can be performed in parallel.}

\vspace{-5pt}
\paragraph{Baselines.} Our baselines include the Graphlet Counting kernel~(GC Kernel)~\citep{shervashidze2009efficient}, which uses the $\Gamma_\text{1-hot}$ representation as input to a downstream classifier.
We report $\Gamma_\text{1-hot}$ separately from {GC Kernel} since $\Gamma_\text{1-hot}$ differs from {GC Kernel} in that we add the same feedforward neural network (MLP) classifier used in the $\Gamma_\text{GNN}$ model.
We also include GIN~\citep{xu2018powerful},
GCN~\citep{Kipf2016} and PNA~\citep{corso2020principal}, considering the sum, mean, and max READOUTs as proposed by~\citet{xu2020neural} for extrapolations (which we denote as {\em XU-READOUT} to not confuse with our $\text{READOUT}_\Gamma$). We also examine a more-expressive GNN, RPGIN~\citep{pmlr-v97-murphy19a}, and the WL Kernel~\citep{shervashidze2011weisfeiler}.
We do not use the method of~\citet{yehudai2020size} as a baseline since it is a covariate shift adaptation approach that requires samples from ${\rm P}(\cG^\text{te}_{N^\text{te}})$, which are not available in our setting.

\vspace{-5pt}
\paragraph{Experiments with single and multiple graph sizes in training.}
Our single-environment experiments consist of a single graph size in training, and different sizes in test (different from the training size).
Whenever multiple environments are available in training ---multiple environments implies different graph sizes---, we employ Invariant Risk Minimization (IRM), considering the penalty proposed by~\citet{arjovsky2019invariant} for each environment (defined empirically as a range of training examples with similar graph sizes).

For each task, we report
\begin{enumerate*}[label=(\alph*)]
\item \emph{training} accuracy
\item \emph{validation} accuracy, which are new examples sampled from ${\rm P}(Y,\cG^\text{tr}_{N^\text{tr}})$; and
\item \emph{extrapolation test} accuracy, which are new OOD examples sampled from ${\rm P}(Y,\cG^\text{te}_{N^\text{te}})$.
\end{enumerate*}
\update{In our experiments we perform early stopping as per~\citet{hu2020open}}. 

\subsection{Size extrapolation tasks for unattributed graphs}
\label{sec:unattributed}

{\em Schizophrenia task.} %
We use the fMRI brain graph data on 71 schizophrenic patients and 74 controls for classifying individuals with schizophrenia~\citep{de2016mapping}.
Vertices represent brain regions (voxels) with edges as functional connectivity. 
We process the graph differently between training and test data, where training graphs have exactly 264 vertices (a single environment) and \update{control-group graphs in test} have around 40\% fewer vertices. We employ a 5-fold cross-validation for hyperparameter tuning.

{\em \erdosrenyi task.} %
We simulate \erdosrenyi graphs ~\citep{gilbert1959random, erdds1959random}  
as a simple graphon random graph model.
The task is to classify the edge probability $p \in \{0.2, 0.5, 0.8 \}$ of the generated graph.
First we consider a single-environment version of the task, where we train and validate on graphs of size 80 and extrapolate to graphs with size 140 in test.
We also consider another experiment with training/validation graph sizes uniformly selected from $\{70,80\}$ (so we can use IRM), with the test data same as before (graphs of size 140 in test). 

{\bf Results.} 
\Cref{tab:unattributed} shows that all methods perform well in validation (generalization over the training distribution). However, only $\Gamma_\text{1-hot}$ (GC Kernel and our simple classifier), $\Gamma_\text{GIN}$, $\Gamma_\text{RPGIN}$ are able to extrapolate, while displaying very similar ---often identical--- accuracies in validation (sampled from ${\rm P}(\cG^\text{tr}_{N^\text{tr}})$) and test (sampled from ${\rm P}(\cG^\text{te}_{N^\text{te}})$) in all experiments, as predicted by combining the theoretical results in \Cref{prop:Einv} and \Cref{thm:sizeExtrapolationBound}. 
Using IRM in the \erdosrenyi task shows no improvement over not using IRM in the multi-environment setting.

\begin{table}[t]
	\caption{Extrapolation performance over real-world graph datasets with OOD tasks violating \Cref{def:trainG,def:testG} and conditions of \Cref{thm:sizeExtrapolationBound}. \update{Always one of our E-invariant representations $\Gamma_\text{GIN}$ and $\Gamma_\text{RPGIN}$ is amongst the top 4 best methods in all datasets except \textsc{NCI109}}. Table shows mean (standard deviation) Matthews correlation coefficient (MCC) of the classifiers over the OOD test data. Bold emphasises the top-4 models (in average MCC) for each dataset.}
    \vspace{-5pt}
	\label{REAL-WORLD-MATTH}
    \begin{small}
	\begin{sc}
	\begin{center}
	\resizebox{\columnwidth}{!}{
    \begin{tabular}{lrrrr}
    \toprule
    \textbf{Datasets} &  \multicolumn{1}{c}{\textbf{NCI1}} & \multicolumn{1}{c}{\textbf{NCI109}} & \multicolumn{1}{c}{\textbf{PROTEINS}} & \multicolumn{1}{c}{\textbf{DD}} \\
    \midrule
    RANDOM                  & 0.00 (0.00) & 0.00 (0.00) & 0.00 (0.00) & 0.00 (0.00) \\
    PNA                   &   0.21 (0.06) &   \textbf{0.24 (0.06)} &   \textbf{0.26 (0.08)} &   0.24 (0.10) \\
    PNA (mean xu-readout) &   0.12 (0.05) &   0.21 (0.04) &   0.25 (0.06) &   \textbf{0.29 (0.08)} \\
    PNA (max xu-readout)  &   0.16 (0.05) &   0.18 (0.07) &   0.20 (0.05) &   0.12 (0.14) \\
    PNA + IRM             &   0.21 (0.07) &   \textbf{0.27 (0.08)} &   \textbf{0.26 (0.10)} &   \textbf{0.26 (0.08)} \\
    GCN                   &   0.20 (0.06) &   0.15 (0.06) &   0.21 (0.09) &   0.23 (0.05) \\
    GCN (mean xu-readout) &   0.20 (0.04) &   0.15 (0.09) &   0.23 (0.07) &   0.19 (0.06) \\
    GCN (max xu-readout)  &   0.20 (0.04) &   0.19 (0.07) &   0.20 (0.14) &   0.09 (0.08) \\
    GCN + IRM             &   0.12 (0.05) &   \textbf{0.22 (0.06)} &   0.20 (0.07) &   0.23 (0.07) \\
    GIN                   &   \textbf{0.25 (0.06)} &   0.18 (0.05) &   0.23 (0.05) &   0.25 (0.09) \\
    GIN (mean xu-readout) &   0.16 (0.05) &   0.14 (0.05) &   0.24 (0.05) &   \textbf{0.27 (0.12)} \\
    GIN (max xu-readout)  &   0.15 (0.08) &   0.18 (0.08) &  \textbf{0.28 (0.11)} &   0.19 (0.07) \\
    GIN + IRM             &   0.18 (0.08) &   0.16 (0.04) &   \textbf{0.26 (0.06)} &   0.21 (0.09) \\
    RPGIN                 &   0.15 (0.04) &   0.19 (0.05) &   0.24 (0.09) &   0.22 (0.09) \\
    \hline
    WL kernel               & \textbf{0.39 (0.00)} & 0.21 (0.00) & 0.00 (0.00) & 0.00 (0.00) \\
    GC kernel               & 0.02 (0.00) & 0.01 (0.00) & \textbf{0.29 (0.00)} & 0.00 (0.00) \\
    \hline
    $\Gamma_\text{1-hot}$      &   0.17 (0.08) &   \textbf{0.25 (0.06)} &   0.12 (0.09) &   0.23 (0.08) \\
    $\Gamma_\text{GIN}$               &   \textbf{0.24 (0.04)} &   0.18 (0.04) &   \textbf{0.29 (0.11)} &   \textbf{0.28 (0.06)} \\
    $\Gamma_\text{RPGIN}$             &   \textbf{0.26 (0.05)} &   0.20 (0.04) &   0.25 (0.12) &   0.20 (0.05) \\
\bottomrule
    \end{tabular}
}
\end{center}
\end{sc}
\end{small}
\vspace{-8pt}
\end{table}
\subsection{Size/attribute extrapolation for attributed graphs}
\label{sec:attribute}
We now define a Stochastic Block Model (SBM) task with vertex attributes. The SBM has two blocks. Our goal is to classify the cross-block edge probability $\mP_{1,2} = \mP_{2,1} \in \{0.1, 0.3\}$ of a sampled graph.
Vertex attribute distributions depend on the blocks.
In block 1 vertices are randomly assigned red and blue attributes, while in block 2 vertices are randomly assigned green and yellow attributes (see {\bf SBM with vertex attributes} in \Cref{sec:family}).

The change in environments between training and test introduces a joint attribute-and-size distribution shift: In training, the vertices are $90\%$ red (resp.\ green) and $10\%$ blue (resp.\ yellow) in block 1 (resp.\ block 2). While in test, the distribution is flipped and vertices are $10\%$ red (resp.\ green) and $90\%$ blue (resp.\ yellow) in block 1 (resp.\ block 2).
We consider three scenarios, with the same test data made of graphs of size 40:
\begin{enumerate*}
    \item[(a)] A single-environment case, where all training graphs have size 20;
    \item[(b)] A multi-environment case, where training graphs have sizes 14 and 20;
    \item[(c)] A multi-environment case, where training graphs have sizes 20 and 30. These differences in training data will check whether having graphs of sizes closer to the test graph sizes improves the performance of traditional graph representation methods.
\end{enumerate*}

{\bf Results.} \Cref{tab:attr} shows how traditional graph representations and $\Gamma_\text{1-hot}$ (both GC Kernel and our neural classifier) tap into the easy correlation between $Y$ and the density of red and green vertex attributes in the training graphs, while $\Gamma_\text{GIN}$ and $\Gamma_\text{RPGIN}$, with their attribute regularization (\Cref{eq:regul}), are approximately E-invariant, resulting in higher test accuracy that more closely matches their validation accuracy. Moreover, applying IRM has no beneficial impact, while adding larger graphs in training (closer to test graph sizes) increases the extrapolation  accuracy of most methods. 

\subsection{Experiments with real-world datasets that violate our causal model}
Finally, we test our E-invariant representations on datasets that violate \Cref{def:trainG,def:testG} and the conditions of \Cref{thm:sizeExtrapolationBound}. 
We consider four vertex-attributed datasets (\textsc{NCI1}, \textsc{NCI109}, \textsc{DD}, \textsc{PROTEINS}) from~\citet{Morris2020}, and split the data as proposed by~\citet{yehudai2020size}. 
As mentioned earlier,~\citet{yehudai2020size} is not part of our baselines since it requires samples from the test distribution ${\rm P}(\cG^\text{te}_{N^\text{te}})$.

Training and test data are created as follows: Graphs with sizes smaller than the $50$-th percentile are assigned to training, while graphs with sizes larger than the $90$-th percentile are assigned to test. A validation set for hyperparameter tuning consists of $10\%$ held out examples from training.

{\bf Results.} \Cref{REAL-WORLD-MATTH} shows the test results using the Matthews correlation coefficient (MCC) --- MCC was chosen due to significant class imbalances in the OOD shift of our test data, see \Cref{sec:ExperimentAppendix} for more details. \update{We observe that always one of our E-invariant representations $\Gamma_\text{GIN}$ and $\Gamma_\text{RPGIN}$ is amongst the top 4 best methods in all datasets
except \textsc{NCI109}. We also note that the \textsc{WL Kernel} performs really well at \textsc{NCI1} and very poorly (random) on \textsc{PROTEINS} and  \textsc{DD}, showcasing the importance of consistency across datasets}.

\update{{\em Comments on \Cref{REAL-WORLD-MATTH}.} Counterfactual-driven extrapolations have their representation methods tailored to a specific extrapolation mechanism. Unlike in-distribution tasks (and covariate shift adaptation tasks, where one sees test distribution examples of the input graphs), counterfactual-driven extrapolations rely on being robust to the distribution-shift mechanism given by the causal model. Hence, it is expected that the causal extrapolation mechanism that works for a molecular task may not work as well for a social network (unless they share a universal graph-formation mechanism). The schizophrenia task~(\Cref{sec:unattributed}) has the same mechanism as our causal model (hence, good performance). Further research may show that every single dataset in this subsection has its own distinct extrapolation mechanism. We think that although these datasets violate our assumptions, this subsection is important (and we hope will be copied by future work) to show which datasets may need different extrapolation mechanisms.}

\vspace{-5pt}
\section{Conclusions}
\label{sec:conclusions}
\vspace{-2pt}
In this work we looked at the task of out-of-distribution (OOD) graph classification, where train and test data have different distributions.
By introducing a structural causal model inspired by graphon models~\citep{lovasz2006limits}, we defined a representation that is approximately invariant to the train/test distribution changes of our causal model, empirically showing its benefits on both synthetic and real-world datasets against standard graph classification baselines.
Finally, our work contributed a blueprint for defining graph extrapolation tasks through causal models.

\section*{Acknowledgements}
This work was funded in part by the National Science Foundation (NSF) awards CAREER IIS-1943364 and CCF-1918483, the Frederick N.\ Andrews Fellowship, and the Wabash Heartland Innovation Network.  Any opinions, findings and conclusions or recommendations expressed in this material are those of the authors and do not necessarily reflect the views of the sponsors.
We would like to thank our reviewers, who gave excellent suggestions to improve the paper.
Further, we would like to thank Ryan Murphy for many insightful discussions, and Mayank Kakodkar and Carlos H. C.\ Teixeira for their invaluable help with the subgraph function estimation.


\bibliography{motif_rp,invariances,icmlbib,causality} 
\bibliographystyle{icml2021}

\newpage

\appendix
\twocolumn[
\icmltitle{Supplementary Material}]
\section{Proof of \Cref{prop:Einv}}
\propEinv*
\begin{proof}
First note that $Y$ is only a function of $W$ and an independent random noise (following \Cref{def:trainG,def:testG}, depicted in \Cref{fig:SCM}).
\update{Therefore, $Y$ is E-invariant, and thus
\begin{equation*}
\begin{split}
{\rm P}(Y|\cG^\text{te}_{N^\text{te}} = G_{N^\text{te}}^\text{te})
={\rm P}(Y|\cG^\text{tr}_{N^\text{tr}}=G_{N^\text{tr}}^\text{tr}),
\end{split}
\end{equation*}
since the observed graphs in test ($G_{N^\text{te}}^\text{te}$) and training ($G_{N^\text{tr}}^\text{tr}$) only differ due to the change of environments while sharing the same graphon variable $W$ and the other random variables.

The definition of E-invariance states that $\forall e \in \supp(E^\text{tr})$, $\forall e^\dagger \in \supp(E^\text{te})$,
\[
\Gamma(\cG^\text{tr}_{N^\text{tr}}|E^\text{tr}=e)=\Gamma(\cG^\text{te}_{N^\text{te}}|E^\text{te}=e^\dagger).
\]
So, the E-invariance of $\Gamma$ yields
$$\rho(y, \Gamma(\cG^\text{tr}_{N^\text{tr}})) = \rho(y, \Gamma(\cG^\text{te}_{N^\text{te}})),$$ concluding our proof.}
\end{proof}

\section{Proof of \Cref{thm:sizeExtrapolationBound}}
\thmsizeExtrapolationBound*
\begin{proof}
\newcommand{\graphG}{\cG^\wcard_{N^\wcard}}
\newcommand{\graphGa}{\cG^\wcard_{N^\wcard}}
\newcommand{\graphGm}{\cG^\wcard_{m}}
\newcommand{\graphGtr}{\cG^\text{tr}_{N^\text{tr}}}
\newcommand{\graphGte}{\cG^\text{te}_{N^\text{te}}}
\newcommand{\tinj}{t_{\text{ind}}}
We first replace $t_{\rm ind}$ by $t_{\rm inj}$, which is
defined by
\begin{equation}
    t_{\rm inj}(F_k,\cG^\wcard_{N^\wcard})=\frac{{\rm inj}(F_k,\cG^\wcard_{N^\wcard})}{N^\wcard!/(N^\wcard - k)!},
\end{equation}
where $\text{inj}(F_k,\cG^\wcard_{N^\wcard})$ is the number of injective homomorphisms of $F_k$ into $\cG^\wcard_{N^\wcard}$. 
Then, we know from~\citet[Theorem 2.5]{lovasz2006limits} that for unattributed graphs $\cG^\wcard_{N^\wcard}$,
\begin{equation}
    {\rm P}(|t_{\text{inj}}(F_k,\graphG)-t(F_k,W)|>\epsilon)\leq 2\exp(-\frac{\epsilon^2}{2k^{2}}N^{\wcard}),
    \label{eq:bound}
\end{equation}
\update{where $W$ is the graphon function as illustrated in \Cref{def:trainG,def:testG}. As defined in~\citet{lovasz2006limits},
$$t(F_k,W) = \int_{[0,1]^{k}} \prod_{ij\in E(F_k)}W(x_i,x_j)dx_1\cdots dx_{k},$$ 
where $E(F_k)$ denotes the edge set of $F_k$.} This bound shows that $t_{\text{inj}}(F_k,\graphG)$ converges to $t(F_k,W)$ as $N^\wcard\rightarrow \infty$. Actually, we can get similar bounds for $t_{\rm ind}$ using a similar proof technique. \update{Although the value it converges to is different, the difference between values is preserved as it will be proved in the following text}. More importantly, it can be extended to vertex-attributed graphs under our SCM assumptions depicted in \Cref{def:trainG,def:testG}.

We can have this extension because, for the vertex-attributed graphs in \Cref{def:trainG,def:testG}, $g_X$ operates on attributed graphs similarly as the graphon does on unattributed graphs. We can consider the graph generation procedure as first generating the underlying structure, and then adding vertex attribute accordingly to its corresponding random graphon value $U_v \in \text{Uniform}(0,1)$ and graphon $W$. $g_X(\cdot,\cdot)$ being invariant to $E^\text{tr}$ and $E^\text{te}$ means for any two environments $e\in \supp(E^\text{tr}), e^\dagger \in \supp(E^\text{te})$, $g_X(e,\cdot)=g_X(e^\dagger,\cdot)$. 

Then for a given vertex-attributed graph $F_k$ with $k$ vertices, and a given (whole) graph size $N^\wcard$, we can define $\phi$ as an induced map $\phi: [k]\rightarrow [N^\wcard]$, \update{which can be thought about as how the $k$ vertices in $F_k$ are mapped to the vertices in $\graphG$.} 
Define $C_\phi = 1$ if $\phi$ is a homomorphism from $F_k$ to the $W$-random graph $\graphG$, otherwise $C_\phi = 0$. 
We define $\graphGm$ as the subgraph of $\graphGa$ induced by vertices $\{1,...,m\}$. \update{Note here $m$ has two meanings. First, it represents the $m$-th vertex. Second, it indicates the size of the subgraph.} We define $B_m=\frac{1}{\binom{N^\wcard}{k}}\sum_\phi {\rm P}(C_\phi=1|\graphGm), 0\leq m\leq N^\wcard$ as the expected induced homomorphism densities once we observe the subgraph $\graphGm$. Here $B_0=\frac{1}{\binom{N^\wcard}{k}} \sum_\phi {\rm P}(C_\phi=1)$ denotes the expectation before we observe any vertices.

$B_m$ is a martingale for unattributed graphs~\citep[Theorem 2.5]{lovasz2006limits}. And since in \Cref{def:trainG,def:testG} we also use the graphon $W$ and $g_X$ operates on attributed graphs using the graphon $W$ and $U_v$, it is also a martingale for vertex-attributed graphs based on our definitions. We do not need to care about the environment variable $E^\wcard$ here because the function $g_X$ is invariant to $E^\wcard$ and, therefore, it can be treated as a constant. Then,
\begin{align*}
    |B_m&\!-B_{m-1}|\!\\ =&\frac{1}{\binom{N^\wcard}{k}}|\sum_\phi {\rm P}(C_\phi=1|\graphGm)-{\rm P}(C_\phi=1|\cG^\wcard_{m-1})|
    \\\leq &\frac{1}{\binom{N^\wcard}{k}}\sum_\phi| {\rm P}(C_\phi=1|\graphGm)-{\rm P}(C_\phi=1|\cG^\wcard_{m-1})|.
\end{align*}
Here, for each $\phi: [k]\rightarrow [N^\wcard]$ that does not contain \update{the value $m$ in its image (which means no vertex in $F_k$ is mapped to the $m$-th vertex in $\graphG$)},
the difference is $0$. For all other terms, the terms are at most $1$. Thus,
$$|B_m-B_{m-1}|\leq \frac{\binom{N^\wcard-1}{ k-1}}{\binom{N^\wcard}{k}}=\frac{k}{n}.$$

By definition, $B_0 = \frac{1}{\binom{N^\wcard}{k}}\sum_\phi{\rm P}(C_\phi=1) = t^\wcard(F_k,W)$, and $B_{N^\wcard}=\frac{1}{\binom{N^\wcard}{k}}\text{ind}(F_k,\graphG)=t_{\text{ind}}(F_k,\graphG)$, where $t^\wcard(F_k,W)$ is defined as $B_0$, is the expected induced homomorphism densities if we only know the graphon $W$ and we did not observe any vertex in the graph. 

Then, we can use Azuma's inequality for Martingales,
\begin{equation*}
\begin{split}
    {\rm P}(B_{N^\wcard}-B_0>\epsilon) &\leq \exp(-\frac{\epsilon^2}{2N^\wcard(k/N^\wcard)^2}) \\ &= \exp(-\frac{\epsilon^2}{2k^2}N^\wcard).
\end{split}
\end{equation*}
Since $B_{N^\wcard} = t_{\text{ind}}(F_k,\graphG)$, and $B_0 = t^\wcard(F_k,W)$, we get the similar bound as in \Cref{eq:bound},
$${\rm P}(|t_{\text{ind}}(F_k,\graphG)-t^\wcard(F_k,W)|>\epsilon)\leq 2\exp(-\frac{\epsilon^2}{2k^{2}}N^{\wcard}).$$
Since $|\tinj(F_k,\graphGtr)-t^\wcard(F_k,W)|\leq \frac{\epsilon}{2}$,  $|\tinj(F_k,\graphGte)-t^\wcard(F_k,W)|\leq \frac{\epsilon}{2}$ imply $|\tinj(F_k,\graphGtr)-\tinj(F_k,\graphGte)|\leq \epsilon$, we have,
\begin{equation}
\label{eq:bound1}
\begin{split}
    &{\rm P}(|\tinj(F_k,\graphGtr)-\tinj(F_k,\graphGte)|>\epsilon)\\&  = 1 - {\rm P}(|\tinj(F_k,\graphGtr)-\tinj(F_k,\graphGte)|\leq\epsilon) \\
    &\leq 1 - {\rm P}(|\tinj(F_k,\graphGtr)-t^\wcard(F_k,W)|\leq \frac{\epsilon}{2})\\
    &\qquad \cdot{\rm P}(|\tinj(F_k,\graphGte)-t^\wcard(F_k,W)|\leq \frac{\epsilon}{2})\\
    & \leq 1-(1-2\exp(-\frac{\epsilon^2}{8k^2}N^\text{tr}))(1-2\exp(-\frac{\epsilon^2}{8k^2}N^\text{te}))\\&=2(\exp(-\frac{\epsilon^2}{8k^2}N^\text{tr})+\exp(-\frac{\epsilon^2}{8k^2}N^\text{te}))\\
    &\qquad -4\exp(-\frac{\epsilon^2}{8k^2}(N^\text{tr}+N^\text{te}))\\& \leq 2(\exp(-\frac{\epsilon^2}{8k^2}N^\text{tr})+\exp(-\frac{\epsilon^2}{8k^2}N^\text{te})).
    \end{split}
    \raisetag{1.1\baselineskip}
\end{equation}
Then we know,
\begin{equation}
\label{eq:bound2}
    \begin{split}
        &{\rm P}(||\Gamma_\text{1-hot}(\graphGtr)-\Gamma_\text{1-hot}(\graphGte)||_\infty\leq\epsilon)\\&={\rm P}(|\tinj(F_{k'},\graphGtr)-\tinj(F_{k'},\graphGte)|\leq \epsilon, \forall  F_{k'}\in \allpossibleGraphsK)\\&\geq 1 - \sum_{F_{k'}\in \allpossibleGraphsK} {\rm P}(|\tinj(F_{k'},\graphGtr)-\tinj(F_{k'},\graphGte)|>\epsilon)\\&\geq 1-2|\allpossibleGraphsK|(\exp(-\frac{\epsilon^2N^\text{tr}}{8k^2})+\exp(-\frac{\epsilon^2N^\text{te}}{8k^2})).
    \end{split}
    \raisetag{1.1\baselineskip}
\end{equation}
It follows from the Bonferroni inequality that ${\rm P}(\cap_{i=1}^N A_i)\geq 1-\sum_{i=1}^{N}{\rm P(\tilde{A}_i)}$, where $A_i$ and its complement $\tilde{A}_i$ are any events.
Therefore, 
\begin{align*}
    {\rm P}(||\Gamma_\text{1-hot}&(\graphGtr)-\Gamma_\text{1-hot}(\graphGte)||_\infty>\epsilon)\\&\leq 2|\allpossibleGraphsK|(\exp(-\frac{\epsilon^2N^\text{tr}}{8k^2})+\exp(-\frac{\epsilon^2N^\text{te}}{8k^2})),
\end{align*}
concluding the proof.

\end{proof}

\section{Biases in estimating induced homomorphism densities}
\label{sec:BiasAppendix}
Induced (connected) homomorphism densities \update{of a given graph $F_{k'}$} over all possible $k'$-vertex ($k'\leq k$) connected graphs for an $N^\wcard$-vertex graph $\cG_{N^\wcard}^\wcard$ are defined as
\[
    \omega(F_{k'}, \cG_{N^\wcard}^\wcard)=\frac{{\rm ind}(F_{k'},\cG_{N^\wcard}^\wcard)}{\sum_{F_{k'}\in \mathcal{F}_{\leq k}} {\rm ind}(F_{k'},\cG_{N^\wcard}^\wcard)}.
\]
This is a slightly different definition from the induced homomorphism densities in~\Cref{eq:tinj}. In the main text, the denominator is the total number of possible mappings (which can include mappings that are disconnected). Here we consider the total numbers of induced mappings that are connected as is common practice.

Achieving unbiased estimates for induced (connected) homomorphism densities usually requires sophisticated methods and \update{significant} amount of time. We show that a biased estimator can also work for the $\text{GNN}^+$ in \Cref{eq:Gamma-krpgnn} if the bias is multiplicative and the $\text{READOUT}_\Gamma$ is simply the sum of the vertex embeddings. We formalize it as follows.

\begin{restatable}{proposition}{propbias}
\label{prop:bias} 
Assume $\hat{\omega}(F_{k'},\cG_{N^\wcard}^\wcard)$ is a biased estimator for $\omega(F_{k'},\cG_{N^\wcard}^\wcard)$ for any $k'$ and $k'$-sized connected graphs $F_{k'}$ in a $N^\wcard$-vertex $\cG_{N^\wcard}^\wcard$, such that $\mathbb{E}(\hat{\omega}(F_{k'},\cG_{N^\wcard}^\wcard))=\beta(F_{k'})\omega(F_{k'}, \cG_{N^\wcard}^\wcard)$, where $\beta(F_{k'})$ ($\beta(\cdot)>0$) is the bias related to the graph $F_{k'}$, and the expectation is over the sampling procedure.
The expected learned representation $\mathbb{E}(\sum_{F_{k'}\in \mathcal{F}_{\leq k}} \hat{\omega}(F_{k'},\cG_{N^\wcard}^\wcard ) \mathbf{1}^{\rm T}({\text{GNN}^+}(F_{k'})))$ 
can be the same as using the true induced (connected) homomorphism densities $\omega(F_{k'},\cG_{N^\wcard}^\wcard),\forall F_{k'}\in \mathcal{F}_{\leq k}$.
\end{restatable}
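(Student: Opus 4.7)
\textbf{Proof proposal for \Cref{prop:bias}.} The plan is to show that the multiplicative bias $\beta(F_{k'})$ can be absorbed into the learned $\text{GNN}^+$ by exploiting its most-expressive property, leaving the expected representation equal to the one produced by the true densities. The whole argument is driven by linearity of expectation plus a constructive appeal to expressivity.

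First, I would start from the expected learned representation and push the expectation inside the (finite) sum over $F_{k'}\in\mathcal{F}_{\leq k}$. Since $\text{GNN}^+(F_{k'})$ is deterministic (the randomness lives only in the estimator $\hat{\omega}$), linearity of expectation gives
\begin{equation*}
\mathbb{E}\Bigl(\sum_{F_{k'}\in \mathcal{F}_{\leq k}} \hat{\omega}(F_{k'},\cG_{N^\wcard}^\wcard)\,\mathbf{1}^{\mathrm T}\text{GNN}^+(F_{k'})\Bigr)
= \sum_{F_{k'}\in \mathcal{F}_{\leq k}} \beta(F_{k'})\,\omega(F_{k'},\cG_{N^\wcard}^\wcard)\,\mathbf{1}^{\mathrm T}\text{GNN}^+(F_{k'}).
\end{equation*}
This is the key identity; the rest of the proof is about matching this against the target (unbiased) representation.

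Next, suppose the ``ideal'' learnable model corresponds to some particular $\text{GNN}^+$, call it $G^\star$, giving vertex embeddings $G^\star(F_{k'})$ we would want if we had the true densities $\omega(F_{k'},\cG_{N^\wcard}^\wcard)$. I will construct an alternative most-expressive model $\widetilde{G}$ whose sum-readout satisfies $\mathbf{1}^{\mathrm T}\widetilde{G}(F_{k'}) = \beta(F_{k'})^{-1}\mathbf{1}^{\mathrm T} G^\star(F_{k'})$ for every connected $F_{k'}\in\mathcal{F}_{\leq k}$. Since $\beta(\cdot)$ depends only on the isomorphism class of $F_{k'}$ and $\text{GNN}^+$ is most expressive (it can realize any permutation-invariant function over $k'$-vertex graphs by the results of \citet{vignac2020building,maron2019provably,pmlr-v97-murphy19a}), such a $\widetilde{G}$ exists: one can, for example, realize the scalar multiplier $\beta(F_{k'})^{-1}$ as a function of the most-expressive graph embedding and distribute it across vertex-level outputs so that the sum-readout is exactly the rescaled target. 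Plugging $\widetilde{G}$ into the identity above yields
\begin{equation*}
\sum_{F_{k'}} \beta(F_{k'})\,\omega(F_{k'},\cG_{N^\wcard}^\wcard)\,\mathbf{1}^{\mathrm T}\widetilde{G}(F_{k'})
= \sum_{F_{k'}} \omega(F_{k'},\cG_{N^\wcard}^\wcard)\,\mathbf{1}^{\mathrm T} G^\star(F_{k'}),
\end{equation*}
which is exactly the representation one would obtain with the true induced (connected) homomorphism densities and the ideal model $G^\star$.

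The main obstacle I anticipate is justifying that $\beta(F_{k'})^{-1}$ can legitimately be absorbed via a sum-readout, since $\beta$ is a whole-graph quantity while $\text{GNN}^+$ produces per-vertex outputs. The cleanest way to handle this is to write $\mathbf{1}^{\mathrm T}\widetilde{G}(F_{k'}) = \beta(F_{k'})^{-1}\mathbf{1}^{\mathrm T} G^\star(F_{k'})$ and appeal to the universal-approximation / most-expressive property of $\text{GNN}^+$ at the graph level (e.g., by placing the scaling on a single designated vertex while zeroing out the others, or by equally distributing the factor across $k'$ vertices). The positivity assumption $\beta(\cdot)>0$ guarantees that the reciprocal is well defined, and the assumption that $\beta$ depends only on $F_{k'}$ (and not on the ambient graph $\cG_{N^\wcard}^\wcard$) is what allows this absorption to be data-independent, so the same $\widetilde{G}$ works across all inputs.
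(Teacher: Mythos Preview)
Your proposal is correct and follows essentially the same approach as the paper: both arguments use linearity of expectation to pull out the multiplicative bias $\beta(F_{k'})$ and then invoke the most-expressive property of $\text{GNN}^+$ to absorb $\beta(F_{k'})^{-1}$ into the learned subgraph representation (your $\widetilde{G}$ is the paper's $\text{GNN}^+$, your $G^\star$ is the paper's $\text{GNN}^+_0$). Your discussion of how the per-vertex outputs can realize a graph-level rescaling is more explicit than the paper's, but the underlying logic is identical.
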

\begin{proof}
\update{W.L.O.G, assume ${\text{GNN}^+_{0}}(F_{k'})$ is the representation we can learn from the true induced (connected) homomorphism densities $\omega(F_{k'},\cG_{N^\wcard}^\wcard),\forall F_{k'}\in \mathcal{F}_{\leq k}$.
When only using the biased estimators, if we are able to learn the representation ${\text{GNN}^+}(F_{k'})={ \text{GNN}^+_{0}}(F_{k'})/\beta(F_{k'})$ for all $F_{k'}\in \mathcal{F}_{\leq k}$, then we can still get the graph representation in \Cref{eq:Gamma-krpgnn} the same as using the true induced (connected) homomorphism densities. }
This is possible because $\text{GNN}^+$ is proven to be a most expressive $k'$-vertex graph representation, thus it is able to learn any function on the graph $F_{k'}$. Then,
\begin{equation}
\begin{aligned}[b]
    \E\left[\sum_{F_{k'}\in \mathcal{F}_{\leq k}} \hat{\omega}(F_{k'},\cG_{N^\wcard}^\wcard) \mathbf{1}^{\rm T}({\text{GNN}^+}(F_{k'}))\right] =
    \\\sum_{F_{k'}\in \mathcal{F}_{\leq k}} \omega(F_{k'},\cG_{N^\wcard}^\wcard) \mathbf{1}^{\rm T}({\text{GNN}^+_{0}}(F_{k'})),
\end{aligned}
\end{equation}
where $\mathbf{1}^{\rm T}({\text{GNN}^+}(F_{k'}))$ is the sum of the vertex embeddings given by the $\text{GNN}^+$ if it is an equivariant representation of the graph.
\end{proof}

\section{Review of Graph Neural Networks}
\label{sec:GnnTutorial}
Graph Neural Networks (GNNs) constitute a popular class of methods for learning representations of vertices in a graph or graph-wide representations~\citep{Kipf2016,Atwood2016, Hamilton2017, gilmer17a,velickovic2018graph, xu2018powerful, morris2019weisfeiler, pmlr-v97-you19b, liu2019hyperbolic, chami2019hyperbolic}.  
Graph-wide representations can also be obtained by applying GNNs to the connected induced subgraphs in a larger graph and then averaging the resulting subgraph representations.  
That is, in our work, we have applied GNNs to connected induced subgraphs in a graph, and then aggregated (averaged) them to obtain the representation of the graph.  We briefly summarize the idea, but more details can be found in texts such as by~\citet{hamilton2020book} and reviews by~\citet{wu2020comprehensive} and~\citet{zhang2020deeplearningongraphs} and the references therein.

Suppose we have a graph $G$ with vertex set $V = \{1,  \dots, N\}$, and each vertex in our data may carry some vertex attribute (also called a \emph{feature}).  For instance, in a molecule, vertices may represent atoms, edges may represent bonds, and features may indicate the atomic number~\citep{Duvenaud2015}. These vertex features can be stored in an $N \times d$ matrix $\mX$, where $d$ is the dimension of the vertex feature vector.  In particular, row $v \in V$ of $X_v$ holds the attribute associated with vertex $v$.

Roughly speaking, GNNs proceed by passing messages among vertices, later passing the result through a learnable function such as an MLP, and repeating $T \in \mathbb{Z}_{\ge 1}$ times.  At each iteration $t = \{1, 2, \ldots, T\}$, all vertices $v \in V$ are associated with a learned vector $\vh^{(t)}$. Specifically, we begin by initializing a vector as $\vh_{v}^{(0)} = X_{v}$ for every vertex $v \in V$. Then, we recursively compute an update such as the following
\begin{equation}\label{eq:app_gnn}
    \vh^{(t)}_v =\mathrm{MLP}^{(t)} \Big(\vh^{(t-1)}_v, \sum_{ _{u \in \mathcal{N}(v)} } \vh^{(t-1)}_u \Big) , \quad \forall v \in V,
\end{equation}
where $\mathcal{N}(v) \subseteq V$ denotes the neighborhood set of $v$ in the graph, $\mathrm{MLP}^{(t)}$ denotes a multi-layer perceptron, and whose superscript $t$ indicates that the MLP at each recursion layer may have different learnable parameters.  We can replace the summation with any permutation-invariant function of the neighborhood.  We see that GNNs recursively update vertex states with states from their neighbors and their state from the previous recursion layer.  Additionally, we can sample from the neighborhood set rather than aggregating over every neighbor.  Generally speaking there is much research into the variations of this recursion step and we refer the reader to aforementioned references for details.

To learn a graph representation, we can aggregate the vertex representations using a so-called \emph{READOUT} function defined to be permutation-invariant over the labels. A graph representation $\vh_{G}$ by a GNN is then
\begin{equation*}
\vh_{G} = \mathrm{READOUT}\Big( \big\{\vh^{(t)}_v\big\}_{v, t \in V \times \{1\ldots,T\} }\Big),
\end{equation*}
where the vertex features $\vh^{(t)}_v$ are as in~\Cref{eq:app_gnn}.  $\mathrm{READOUT}$ may or may not contain learnable weights. We denote it as \emph{XU-READOUT} to not confuse with our notation $\text{READOUT}_\Gamma$.

The entire function is differentiable and can be learned end-to-end.  These models are thus typically trained with variants of Stochastic Gradient Descent.  In our work, we apply this scheme over connected induced subgraphs in the graph, making them a differentiable module in our end-to-end representation scheme.

\section{Further Related Work}
\label{sec:RelatedWorkAppendix}
This section provides a more in-depth discussion placing our work in the context of existing literature.  We explain why existing state-of-the-art graph learning methods will struggle to extrapolate, subgraph methods, and more in Graph Neural Networks literature.

\paragraph{Extrapolation.}
Geometrically, extrapolation can be thought as reasoning beyond a convex hull of a set of training points~\citep{hastie2012elements, haffner2002escaping, king2006dangers, xu2020neural}.
However, for neural networks---and their arbitrary representation mappings---this geometric interpretation is insufficient to describe a truly broad range of tasks. 
Rather, extrapolations are better described through counterfactual reasoning~\citep{neyman1923applications, rubin1974estimating, pearl2009causality, scholkopf2019causality}.

\update{As shown in~\citet{geirhos2020shortcut}, the ability of deep neural networks to capture shortcuts for predictions tends to results in poor extrapolation performances. Therefore, specific methods or strategies must be adopted to obtain extrapolation abilities.}

There are other approaches for conferring models with extrapolation abilities.  These ideas have started to permeate graph literature, which we touch on here, but remain outside the scope of our systematic counterfactual modeling framework.

Incorporating domain knowledge is an intuitive approach to learn a function that predicts adequately outside of the training distribution, data collection environment, and heuristic curation.  This has been used, for example, in time series forecasting~\citep{scott1993causal, armstrong2005decomposition}.  This can come in the form of re-expressing phenomena in a way that can be adequately and accurately represented by machine learning methods~\citep{Lample2020Deep} or specifically augmenting existing general-purpose methods to task~\citep{klicpera2020directional}.  In the context of graphs, it has been used to pre-process the graph input to make a learned graph neural network model a less complex function and thus extend beyond training data~\citep{xu2020neural}, although this does not necessarily fall into the framework we consider here.

Another way of moving beyond the training data is \emph{robustness}.  Relevant for deep learning systems are adversarial attacks~\citep{papernot2017practical}.  Neural networks can be highly successful classifiers on the training data but become wildly inaccurate with small perturbations of those training examples~\citep{goodfellow2014adv}.  This is important, say, in self-driving cars~\citep{SitawarinCars}, which can become confused by graffiti.  This becomes particularly problematic when we deploy systems to real-world environments outside the training data.  Learning to defend against adversarial attacks is in a way related to performing well outside the environment and curation heuristics encountered in training.   An interesting possibility for future work is to explore the relationships between the two approaches.

Overfitting will compromise even in-distribution generalization.  Regularization schemes such as explicit penalties are a well known and broadly applicable strategy~\citep{hastie2012elements}.  Another implicit approach is data augmentation~\citep{hernandez2018data}, and the recent GraphCrop method proposes a scheme for graphs that randomly extracts subgraphs from certain graphs in a minibatch during training~\citep{wang2020graphcrop}.  These directions differ from our own in that we seek a formulation for extrapolation even when overfitting is not necessarily a problem. Still these two approaches are both useful in the toolbox of representation learning.

We would like to point out that representation learning on \emph{dynamic graphs}~\citep{kazemi2020representation}, including tasks like link prediction on growing graphs~\citep{anonymous2021incremental}, is a mostly separate research direction from what we consider here (although it is now understood that temporal and static graph representations are equivalent for observational predictions~\citep{gao2021equivalence}).  
In these scenarios, there is a direct expectation that the process we model will change and evolve.  For instance, knowledge bases -- a form of graph encoding facts and relationships  -- are inevitably incomplete~\citep{sun-etal-2018-open}.  Simply put, developments in information and society move faster than they can be curated.  Another important example is recommendation systems~\citep{kumarLearningDynamic} based on evolving user-item networks.  These concepts are related to the counterfactuals on graphs~\citep{eckles2016design} that we discuss.  This is fundamentally different from our work where we do graph-wide learning and representation of a dataset of many graphs rather than one constantly evolving graph.

\paragraph{Subgraph methods and Graphlet Counting Kernels.}
A foundational principle of our work is that, by exploiting subgraphs, we confer graph classifications models with both the ability to fit the training data and to extrapolate to graphs from a different distribution (OOD generalization).  
As detailed in Section~\ref{sec:conditions}, this insight follows from the Aldous-Hoover representation of jointly exchangeable distributions (graphs)~\citep{hoover1979relations, aldous1981representations, kallenberg2006probabilistic, orbanz2014bayesian} and work on graph limits~\citep{lovasz2012large}.  We now discuss the larger literature that uses subgraphs in machine learning.

Counting kernels~\citep{shervashidze2009efficient} measures the similarity between two graphs by the dot product of their normalized counts of connected induced subgraphs (graphlet).  This can be used for classification via kernelized methods like Support Vector Machines (SVM). 
\mbox{\citet{yanardag2015deep}} argues that the dot product does not capture dependence between subgraphs and extend to a general bilinear form over a learned similarity matrix.  These approaches are related to the Reconstruction Conjecture, which posits graphs can be determined through knowledge of their subgraphs~\citep{kelly1957congruence, ulam1960collection, hemminger1969reconstructing, mckay1997small}.  It is known that computing a maximally expressive graph kernel, or one that is injective over the class of graphs, is as hard as the Graph Isomorphism problem, and thus intractable in general~\citep{gartner2003graph, kriege2020survey}.
\citet{kriege2018property} shows graph properties that subgraph counting kernels fail to predict. The work then proposes a method to make them more expressive, but only for graphs without vertex attributes.  

Most applications of graphlet counting  do not exploit vertex attributes, and even those that do (e.g.~\citet{wale2008comparison})  are likely to fail under a distribution shift over attributes; this is because counting each type of attributed subgraph (e.g. red clique, blue clique) is sensitive to distribution shift.  In comparison, our use of GNNs confers our framework with the ability learn a compressed representation of different attributed subgraphs, tailored for the task, and extrapolate even under attribute shift.  We demonstrate this in \Cref{tab:attr}.  
Last, a recent work~\citep{ye2020deepmap} proposes to pass the attributed subgraph counts to a downstream neural network model to better compress and represent the high dimensional feature space.  However, with attribute shifts, it may be that the downstream layers did not see enough attributed subgraph of certain types in training to learn how to correctly represent them.  We feel that it is better to \emph{compress} the attributed signal \emph{in the process of} representing the graph to handle these vertex features, the approach we take in this work.

There are many graph kernel methods that do not leverage subgraph counts but other features to measure graph similarity, such as the count of matching walks, e.g.~\citet{kashima2003marginalized, borgwardt2005protein, borgwardt2005shortest}. The WL Kernel uses the WL algorithm to compare graphs~\citep{shervashidze2011weisfeiler} and will inherit the limitations of WL GNNs like inability to represent cycles.~\citet{rieck19a} propose a  persistent WL kernel that uses ideas from Topological Data Analysis~\citep{munch2017user} to better capture such structures when comparing graphs.  Methods that do not count subgraphs will not inherit properties regarding a graph-size environment change -- from our analysis of asymptotic graph theory -- but all extrapolation tasks require an assumption and our framework can be applied to studying the ability of various kernel methods to extrapolate under different scenarios.  Those relying on attributes to build similarities are also likely to suffer from attribute shift.

Subgraphs are studied to understand underlying mechanisms of graphs like gene regulatory networks, food webs, and the vulnerability of networks to attack, and sometimes used prognostically.  A popular example investigates \emph{motifs}, subgraphs that appear more frequently than under chance~\citep{stone1992competitive, shen2002network, milo2002network, mangan2003structure, sporns2004motifs, bascompte2005simple, alon2007network, chen2013identification, benson2016higher, stone2019network, dey2019network, wang2020identifying}.    
Although the study of motifs is along a different direction and often focus on one-graph datasets, our framework learns rich latent representations of subgraphs.  
Another line of work uses subgraph counts as graph similarity measures, an example being matching real-world graphs to their most similar random graph generation models~\citep{prvzulj2007biological}.

Other machine learning methods based on subgraphs have also been proposed.  Methods like mGCMN~\citep{li2020representation}, HONE~\citep{rossi2018higher}, and MCN~\citep{lee2018higher} learn representations for vertices by extending classical methods over edges to a new neighborhood structure based on subgraphs; for instance, mGCMN runs a GNN on the new graph.  These methods do not exploit all subgraphs of size $k$ and will not learn subgraph representations in a manner consistent with our extrapolation framework.
\citet{teruhamilton2020} uses subgraphs around vertices to predict missing facts in a knowledge base.
Further examples include the Subgraph Prediction Neural network~\citep{meng2018subgraph} that predicts subgraph classes in one dynamic heterogeneous graph; counting the appearance of edges in each type of subgraph for link prediction tasks~\citep{abuoda2019link}; and SEAL~\citep{zhang2018link} runs a GNN over subgraphs extracted around candidate edges to predict whether an edge exists. While these methods exploit small subgraphs for their effective balance between rich graph information and computational tractability, they are along an orthogonal research direction.

\paragraph{Graph Neural Networks.}
Among the many approaches for graph representation learning and classification, which include methods for vertex embeddings that are subsequently read-out into graph representations~\citep{belkin2002laplacian, perozzi2014deepwalk,niepert2016learning, ou2016asymmetric, kipf2016variational, grover2016node2vec, yu2018learning, qiu2018network, maron2018invariant, maron2019provably, wu2020comprehensive, hamilton2020book, chami2020machine}, we focus our discussion and modeling on Graph Neural Network (GNN) methods~\citep{Kipf2016,Atwood2016, Hamilton2017, gilmer17a,velickovic2018graph, xu2018powerful, morris2019weisfeiler, pmlr-v97-you19b, liu2019hyperbolic, chami2019hyperbolic}. 
GNNs are trained end-to-end, can straightforwardly provide latent graph representations for graphs of any size, easily handle vertex/edge attributes, are computationally efficient, and constitute a state-of-the-art method. However, GNNs lack extrapolation capabilities due also to their inability to learn latent representations that capture the topological structure of the graph~\citep{xu2018powerful, morris2019weisfeiler, garg2020gnnRepresentation, sato2020survey}.  Relevantly, many cannot count the number of subgraphs such as triangles (3-cliques) in a graph~\citep{arvind2020weisfeiler,chen2020can}.  In general, our theory of extrapolating in graph tasks requires properly capturing graph structure. \update{In our work we consider GIN~\citep{xu2018powerful}, GCN~\citep{Kipf2016} and PNA~\citep{corso2020principal} as baseline GNN models. GIN and GCN are some of the most widely used models in literature. PNA generalizes different GNN models by considering multiple neighborhood aggregation schemes. Note that since we compare against PNA we do not need to consider other neighborhood aggregation schemes in GNNs, as those studied in~\citet{velivckovic2019neural}. To test whether more expressive models are able to extrapolate, we employ RPGIN~\citep{pmlr-v97-murphy19a}. In our experiments, we show that these state-of-the-art methods are expressive in-distribution but fail to extrapolate.}

\section{Experiments}
\label{sec:ExperimentAppendix}

In this appendix we present the details of the experimental section, discussing the hyperparameters
that have been tuned.
Training was performed on NVIDIA GeForce RTX 2080 Ti, GeForce GTX 1080 Ti, TITAN V, and TITAN Xp GPUs.

\subsection{Model implementation}

All neural network approaches, including the models proposed in this paper, are implemented in PyTorch~\citep{pytorchcitation} and Pytorch Geometric~\citep{FeyLenssen2019}.

Our GIN~\citep{xu2018powerful}, GCN~\citep{Kipf2016} and PNA~\citep{corso2020principal} implementations are based on their Pytorch Geometric implementations. We consider sum, mean, and max READOUTs as proposed by~\citet{xu2020neural} for extrapolations (denoted by {\em XU-READOUT}). For RPGIN~\citep{pmlr-v97-murphy19a}, we implement the permutation and concatenation with one-hot identifiers (of dimension 10) and use GIN as before. Other than a few hyperparameters and architectural choices,  
we use standard choices (e.g.~\citet{hu2020open}) for neural network architectures. If the graphs are unattributed, we follow convention and assign a constant ${\bf 1}$ dummy feature to every vertex. 

We use the WL graph kernel implementations provided by the \emph{graphkernels} package~\citep{Sugiyama-2017-Bioinformatics}.  All kernel methods use a Support Vector Machine on scikit-learn~\citep{scikit-learn}.  

The Graphlet Counting kernel (GC kernel), as well as our own procedure, relies on being able to efficiently count attributed or unattributed connected induced homomorphisms within the graph.  We use ESCAPE~\citep{pinar2017escape} and R-GPM~\citep{teixeira2018graph} as described in the main text.  The source code of ESCAPE is available online and the authors of~\citet{teixeira2018graph} provided us their code. We pre-process each graph beforehand and save the obtained estimated induced homomorphism densities. Note that R-GPM takes around $20$ minutes per graph in the worst case considered, but graphs can be pre-processed in parallel. ESCAPE takes up to one minute per graph.

\newcommand{\G}{\cG^\wcard_{N^\wcard}}

All the models learn graph representations $\Gamma(\G)$, which we pass to a $L$-hidden layer feedforward neural network (MLP) with softmax outputs ($L \in \{0,1\}$ depending on the task) to obtain the prediction.  For \karyGnn, and \karyRpGnn, we use respectively GIN and RPGIN as our base models to obtain latent representations for each $k$-sized connected induced subgraph. Then, we sum over the latent representations, each weighted by its corresponding induced homomorphism density, to obtain the graph representation. For \graphletCounting, the representation $\Gamma_{\text{1-hot}}(\G)$ is a vector containing densities of each (possibly attributed) $k$-sized connected subgraph. To map this into a graph representation, we apply $\Gamma_{\text{1-hot}}(\G)\transpose \mW$ where $\mW$ is a learnable weight matrix whose rows are subgraph representations.  Note that this effectively learns a unique weight vector for each subgraph type.
 
We use the Adam optimizer to optimize all the neural network models.  When an in-distribution validation set is available (see below), we use the weights that achieve best validation-set performance for prediction. Otherwise, we train for a fixed number of epochs.

The specifics of hyperparameter grids and downstream architectures are discussed in each section below.

\subsection{Schizophrenia Task: Size extrapolation}
The results of these experiments are reported in \Cref{tab:unattributed} (left).
The data was graciously provided by the authors of~\citet{de2016mapping}, which they pre-processed from publicly available data from The Center for Biomedical Research Excellence. 
There are 145 graphs which represent the functional connectivity brain networks of 71 schizophrenic patients and 74 healthy controls.  Each graph has 264 vertices representing spherical regions of interest (ROIs).  Edges represent functional connectivity.  Originally, edges reflected a time-series coherence between regions.  If the coherence between signals from two regions was above a certain threshold, the authors created a weighted edge.  Otherwise, there is no edge.  For simplicity, we converted these to unweighted edges. Extensive pre-processing must be done over fMRI data to create brain graphs.  This includes discarding signals from certain ROIs.  As described by the authors, these choices make highly significant impacts on the resulting graph.  We refer the reader to the paper~\citep{de2016mapping}.  
Note that there are numerous methods for constructing a brain graph, and in ways that change the number of vertices.  The measurement strategy taken by the lab can result in measuring about 500 ROIs, 1000 ROIs, or 264 as in the case we consider~\citep{hagmann2007mapping, wedeen2005mapping, de2016mapping}.

For our purposes, we wish to create an extrapolation task, where a change in environment leads to an extrapolation set that contains smaller graphs. 
For this, we randomly select 20 of the 145 graphs in the dataset, balanced among the healthy and schizophrenic patients, to be used as test.
For each healthy-group graph in these 20 graphs, we sample (with replacement) $\lfloor 0.4\times 264 \rfloor$ vertices to be removed. In average, the new size for the healthy-group graphs in these 20 graphs is $178.2$.

We hold out the test graphs that are later used to assess the extrapolation capabilities. Over the remaining data, we use a stratified 5-fold cross-validation to choose the hyperparameters and to report the validation accuracy.

Once the best hyperparameters are chosen, we re-train the model on the entire training data using 10 different initialization seeds, and predict on the test. 

For \karyGnn $ $ and \karyRpGnn, in their GNNs, the aggregation MLP of \Cref{eq:app_gnn} has hidden neurons chosen among $\{32, 64, 128, 256\}$ and number of layers (i.e. recursions of message-passing) among $\{1, 2\}$.
The learning rate is chosen in $\{0.001, 0.0001\}$. The value of $k$ is treated as a hyperparameter chosen in $\{4, 5\}$.

For \graphletCounting, recall that we wish to learn the matrix $\mW$ whose rows are subgraph representations. We choose the dimension of the representations among $\{32, 64, 128, 256\}$ and the learning rate in $\{0.001, 0.0001\}$. The value of $k$ is treated as a hyperparameter chosen in $\{4, 5\}$.

For the GNNs, we tune the learning rate in $\{0.01, 0.001\}$,
the number of hidden neurons of the MLP in \Cref{eq:app_gnn} in $\{32, 64, 128\}$,
the number of layers among $\{1, 2, 3\}$.

For all these models, we use a batch size of 32 graphs and a single final linear layer with a softmax activation as the downstream classifier. We optimize for 400 epochs. 

For the graph kernels, following~\citet{kriege2020survey}, we tune the regularization hyperparameter C in SVM  over the set $\{10^{-3}, 10^{-2}, 10^{-1},1, 10, 10^{2},10^{3}\}$.  We tune the number of Weisfeiler-Lehman iterations of the WL kernel to be in $\{1, 2, 3, 4\}$ (see~\citet[Section 3.1]{kriege2020survey}).

\subsection{\erdosrenyi Connection Probability: Size Extrapolation}

We simulated \erdosrenyi graphs (Gnp model) using NetworkX~\citep{SciPyProceedings_11_networkx}.
\update{The task is to classify the edge probability $p \in \{0.2, 0.5, 0.8 \}$ of the generated graph.}
\Cref{tab:unattributed} shows results for a single environment task (middle), where graphs in training have all size $80$, and a multiple environment task (right), where training graphs have sizes in $\{70, 80\}$ chosen uniformly at random. In both cases, the test is composed of graphs of size 140. The training, validation, and test sets are fixed. The number of graphs in training, validation, and test are 80, 40, and 100, respectively.
The induced homomorphism densities are obtained for subgraphs of a fixed size $k = 5$.

For \graphletCounting, we hyperparameter tune the dimension of the subgraph representations in $\{32, 64, 128, 256\}$ and the learning rate in $\{0.1, 0.01, 0.001\}$.

For the GNNs and for \karyGnn, and \karyRpGnn, we hyperparameter tune
the number of hidden neurons in the MLP of the GNN (\Cref{eq:app_gnn}) in $\{32, 64, 128, 256\}$ (GNN is used to learn the representation for $k$-sized subgraph for \karyGnn, and \karyRpGnn). The number of layers is also a hyperparameter in $\{1, 2, 3\}$ (3 layers only for the GNNs), and the learning rate in $\{0.1, 0.01, 0.001\}$. We also hyperparameter tune the presence or absence of the Jumping Knowledge mechanism from~\citet{xuJumpingKnowledge}.

For IRM, we consider the two distinct graph sizes to be the two training environments. We tune the regularizer $\lambda$~\citep[Section 3]{arjovsky2019invariant} in $\{4, 8, 16, 32\}$, stopping at 32 because increasing its value decreased performances.

We train all neural models for 500 epochs with batch size equal to the full training data. The downstream classifier is composed by a single linear layer with softmax activations. 
We perform early stopping as per~\citet{hu2020open}. 
The hyperparameter search is performed by training all models with 10 different initialization seeds and selecting the configuration that achieved the highest mean accuracy on the validation data. 
Then, we report the mean (and standard deviation) accuracy over the training, the validation, and the test data in \Cref{tab:unattributed} (right).

For the graph kernels, following~\citet{kriege2020survey}, we tune the regularization hyperparameter C in SVM  over the set $\{10^{-3}, 10^{-2}, 10^{-1},1, 10, 10^{2},10^{3}\}$.  We tune the number of Weisfeiler-Lehman iterations of the WL kernel to be among $\{1, 2, 3, 4\}$ (see~\citet[Section 3.1]{kriege2020survey}).

\subsection{Extrapolation performance over SBM attributed graphs}
We sample Stochastic Block Model graphs (SBM) using NetworkX~\citep{SciPyProceedings_11_networkx}.
Each graph has two blocks, having a within-block edge probability of $\mP_{1,1} = \mP_{2,2} = 0.2$. The cross-block edge probability is $\mP_{1,2} = \mP_{2,1} \in \{0.1, 0.3\}$.
The label of a graph is its cross-block edge probability, i.e., $Y  = \mP_{1,2}$.

Vertex color distributions change with train and test environments. In training, vertices in the first block
are either red or blue, with probabilities $\{0.9, 0.1\}$, respectively, while
vertices in the second block are either green or yellow, with probabilities $\{0.9, 0.1\}$, respectively. 
In test, the probability distributions are reversed: Vertices in the first block
are either red or blue, with probabilities $\{0.1, 0.9\},$ respectively, and vertices in
the second block are green or yellow with probabilities $\{0.1, 0.9\},$ respectively.

\Cref{tab:attr} shows results for the three scenarios we considered:
\begin{enumerate*}
    \item A single environment, where training graphs are of size 20 (left),
    \item A multiple environment, where training graphs have size 14 or 20, chosen uniformly at random (middle),
    \item A multiple environment, where training graphs are of size 20 or 30, chosen uniformly at random (right).
\end{enumerate*}
The test is the same in all cases, and contains graphs of size 40. 
The number of graphs in training, validation, and test are 80, 20, and 100, respectively.
We obtain the induced homomorphism densities for \karyGnn, \karyRpGnn, \graphletCounting $ $ for a fixed subgraph size $k = 5$.

For the GNNs and for \karyGnn $ $ and \karyRpGnn, we choose the number of hidden neurons in the MLP of the GNN (\Cref{eq:app_gnn}) in $\{32, 64, 128, 256\}$, the number of layers in $\{1, 2, 3\}$ (3 layers only for the GNNs) and hyperparameter tune the presence or absence of the Jumping Knowledge mechanism from~\citet{xuJumpingKnowledge}. We add the regularization penalty in \Cref{eq:regul} for \karyGnn $ $ and \karyRpGnn $ $ in this experiments.
For \karyGnn $ $ and \karyRpGnn $ $, we choose the learning rate in $\{0.01, 0.001\}$
and the regularization weight in $\{0.1, 0.15\}$. For the GNNs we choose the learning rate in $\{0.1, 0.01, 0.001\}$.

For IRM, we consider the two distinct graph sizes to be the two training environments. We can not treat vertex attributes as environment here since we only have a single vertex-attribute distribution in training. We tune the regularizer $\lambda$~\citep[Section 3]{arjovsky2019invariant} in $\{4, 8, 16, 32\}$, stopping at 32 because increasing its value decreased performances.

For \graphletCounting, we hyperparameter tune the dimension of the subgraph representations in $\{32, 64, 128, 256\}$ and the learning rate in $\{0.01, 0.001\}$.

We optimize all neural models for 500 epochs with batch size equal to the full training data. We use a single layer with softmax outputs as the downstream classifier.
We perform early stopping as per~\citet{hu2020open}. 
The hyperparameter search is performed by training all models with 10 different initialization seeds and selecting the configuration that achieved the highest mean accuracy on the validation data. 
Then, we report the mean (and standard deviation) accuracy over the training, the validation, and the test data in \Cref{tab:attr}.

For the graph kernels, following~\citet{kriege2020survey}, we tune the regularization hyperparameter C in SVM  over the set $\{10^{-3}, 10^{-2}, 10^{-1},1, 10, 10^{2},10^{3}\}$.  We tune the number of Weisfeiler-Lehman iterations of the WL kernel to be among $\{1, 2, 3, 4\}$ (see~\citet[Section 3.1]{kriege2020survey}).

%
%
\begin{table*}
	\caption{Dataset statistics, Table from~\citet{yehudai2020size}.}
	\label{stat}
    \begin{small}
	\begin{sc}
	\begin{center}
	\resizebox{\textwidth}{!}{
	\begin{subtable}{\textwidth}
	\centering
        \begin{tabular}{|l|r|r|r|r|r|r|}
            \cline{2-7}
            \multicolumn{1}{c|}{} & \multicolumn{3}{c|}{\textbf{NCI1}} & \multicolumn{3}{c|}{\textbf{NCI109}} \\
            \cline{2-7}
            \multicolumn{1}{c|}{} & \textbf{all} & \textbf{Smallest} $\mathbf{50\%}$ & \textbf{Largest $\mathbf{10\%}$} & \textbf{all} & \textbf{Smallest} $\mathbf{50\%}$ & \textbf{Largest $\mathbf{10\%}$} \\
            \hline
            \textbf{Class A} & $49.95\%$ & $62.30\%$ & $19.17\%$ & $49.62\%$ & $62.04\%$ & $21.37\%$ \\
            \hline
            \textbf{Class B} & $50.04\%$ & $37.69\%$ & $80.82\%$ & $50.37\%$ & $37.95\%$ & $78.62\%$ \\
            \hline
            \textbf{Num of graphs} & 4110 & 2157 & 412 & 4127 & 2079 & 421 \\
            \hline
            \textbf{Avg graph size} & 29 & 20 & 61 & 29 & 20 & 61 \\
            \hline
        \end{tabular}
    \end{subtable}
    }

    \bigskip

	\resizebox{\textwidth}{!}{
    \begin{subtable}{\textwidth}
    \centering
        \begin{tabular}{|l|r|r|r|r|r|r|}
            \cline{2-7}
            \multicolumn{1}{c|}{} & \multicolumn{3}{c|}{\textbf{PROTEINS}} & \multicolumn{3}{c|}{\textbf{DD}} \\
            \cline{2-7}
            \multicolumn{1}{c|}{} & \textbf{all} & \textbf{Smallest} $\mathbf{50\%}$ & \textbf{Largest $\mathbf{10\%}$} & \textbf{all} & \textbf{Smallest} $\mathbf{50\%}$ & \textbf{Largest $\mathbf{10\%}$} \\
            \hline
            \textbf{Class A} & $59.56\%$ & $41.97\%$ & $90.17\%$ & $58.65\%$ & $35.47\%$ & $79.66\%$ \\
            \hline
            \textbf{Class B} & $40.43\%$ & $58.02\%$ & $9.82\%$ & $41.34\%$ & $64.52\%$ & $20.33\%$ \\
            \hline
            \textbf{Num of graphs} & 1113 & 567 & 112 & 1178 & 592 & 118 \\
            \hline
            \textbf{Avg graph size} & 39 & 15 & 138 & 284 & 144 & 746 \\
            \hline
        \end{tabular}
    \end{subtable}
    }
    \end{center}
    \end{sc}
    \end{small}
\end{table*}
%
\subsection{Extrapolation performance in real world tasks that violate our causal model}
The results on graphs that violate our causal model are reported in \Cref{REAL-WORLD-MATTH}.
We use the datasets from~\citet{Morris2020}, split into train, validation and test as proposed by~\citet{yehudai2020size}.
In particular, train is obtained by considering the graphs with sizes smaller than the 50-th percentile, and test those with sizes larger than the 90-th percentile. Additionally, $10\%$ of the training graphs is held out from training and used as validation. For statistics on the datasets and corresponding splits, see~\citet{yehudai2020size}. 

We obtain the homomorphism densities for a fixed subgraph size $k = 4$. We observed that larger subgraph sizes, $k \geq 5$, implies a larger number of distinct subgraphs and consequently a smaller proportion of shared subgraphs in different graphs. 
To further reduce the number of distinct subgraphs seen by the models, we only consider the most common subgraphs in training and validation when necessary.
Specifically, for \textsc{NCI1} and \textsc{NCI109}, we only use the top 100 subgraphs (out of a total of around 300), and for \textsc{DD} only the 30k most common (out of a total of around 200k). For \textsc{PROTEINS} we keep all the distinct subgraphs (which are around 180).

For the GNNs, we follow the setup proposed in~\citet{yehudai2020size}, where all the GNNs have 3 layers and a final classifier composed of a feedforward neural network (MLP) with 1 hidden layer and softmax outputs. \update{We also use a dropout of 0.3}. We tune the batch size in $\{64, 128\}$, the learning rate in $\{0.01, 0.005, 0.001\}$ and the network width in $\{32, 64\}$. For \karyGnn $ $ and \karyRpGnn, the setup is the same, except for the number of GNN layers that is set to 2. For \textsc{DD} we use a fixed batch size of 256 to reduce the number of times the subgraphs are passed to the network, in order to speed up training.

For \graphletCounting, we choose the batch size in $\{64, 128\}$, the learning rate in $\{0.01, 0.005, 0.001\}$ and the dimension of the subgraph representations in $\{32, 64\}$.

For IRM we tune the regularizer $\lambda$~\citep[Section 3]{arjovsky2019invariant} in $\{8, 32, 128, 512\}$. The two environments are considered to be graphs with size smaller than the median size in the training graphs and larger than the median size in the training graphs, respectively.

To mitigate the imbalance between classes in training, we reweight the classes in the loss with the training proportions for each class.
We train all neural models for 1000 epochs \update{using early stopping as per~\citet{hu2020open}}. We test the models on the epoch achieving the highest mean Matthew Correlation Coefficient on validation because of the significant class imbalance in the test, see \Cref{stat}. 

For the graph kernels, following~\citet{kriege2020survey}, we tune the regularization hyperparameter C in SVM  over the set $\{10^{-3}, 10^{-2}, 10^{-1},1, 10, 10^{2},10^{3}\}$. We 
fix the number of Weisfeiler-Lehman iterations of the WL kernel to $3$ (see~\citet[Section 3.1]{kriege2020survey}), which is comparable to the $3$ GNN layers.


%



\cleardoublepage

\end{document}
